\documentclass{article} 
\usepackage{iclr2027_conference,times}

\usepackage{amsmath,amsfonts,bm}

\def\eqref#1{equation~\ref{#1}}

\def\1{\bm{1}}

\DeclareMathAlphabet{\mathsfit}{\encodingdefault}{\sfdefault}{m}{sl}
\SetMathAlphabet{\mathsfit}{bold}{\encodingdefault}{\sfdefault}{bx}{n}

\def\gX{{\mathcal{X}}}

\DeclareMathOperator*{\argmin}{arg\,min}

\usepackage{hyperref}
\usepackage{url}
\usepackage{tikz}
\usepackage{amsmath, amssymb}
\usetikzlibrary{positioning, arrows.meta, shapes.geometric, calc, backgrounds, fit, scopes}

\usepackage{enumitem}
\usepackage{caption}
\usepackage{wasysym}
\usepackage{tabularx}
\usepackage{booktabs}
\usepackage{wrapfig}
\usepackage{titletoc}     

\usepackage{amsthm}

\theoremstyle{plain}
\newtheorem{theorem}{Theorem}[section]
\newtheorem{proposition}[theorem]{Proposition}

\newtheorem{corollary}[theorem]{Corollary}
\theoremstyle{definition}
\newtheorem{definition}[theorem]{Definition}

\newtheorem{remark}[theorem]{Remark}
\theoremstyle{remark}

\title{Discrete Action Matching: Learning Stochastic Dynamics from Samples via State Graphs}

\author{Mikhail Persiianov \\
Applied AI Institute\\
Moscow, Russia \\
\texttt{persiianov.mi@gmail.com} \\
\And
Alexander Korotin \\
Applied AI Institute \\
AXXX \\
Moscow, Russia \\
}

\iclrfinalcopy 
\begin{document}

\maketitle

\begin{abstract}
Learning population dynamics from unpaired temporal marginals is an ill-posed inverse problem that requires structural assumptions on the underlying dynamics. We introduce \emph{Discrete Action Matching} (DAM), a finite-state counterpart of Action Matching based on discrete Wasserstein geometry. For a prescribed marginal path and transport geometry, we derive an action-minimization objective for its canonical minimum-kinetic-energy current. Our key observation is that the density dependence of the discrete action reduces to neighboring density ratios. Along an empirical interpolation of the snapshots, DAM first estimates these ratios and then learns an action potential. The learned fields also define a graph-supported Markov sampler. Experiments on controlled synthetic dynamics and real mouse gastrulation data evaluate marginal reconstruction and interpolation. Additional experiments approximate numerical surface-transport paths from paired samples.
\end{abstract}

\noindent\begin{minipage}{\linewidth}
    \centering
    \includegraphics[width=\linewidth]{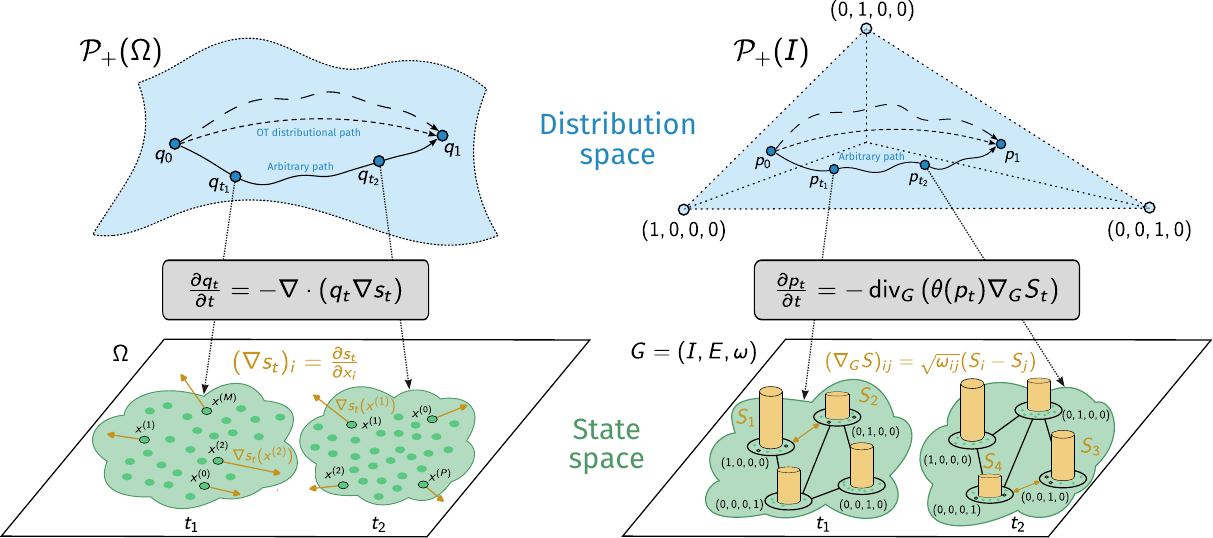}
    \captionsetup{font=small,hypcap=false}
    \captionof{figure}{
        \textbf{Action Matching \citep[left]{neklyudov2023action} and Discrete Action Matching (ours, right).}
        Continuity equations relate prescribed marginal paths (top) to potential fields and samples in the corresponding state spaces (bottom).
        Dashed curves illustrate alternative paths.
        Notation is introduced at the end of Section~\ref{sec:introduction}.
    }
    \label{fig:dam_pipeline}
\end{minipage}
\par\medskip

\section{Introduction}
\label{sec:introduction}

Learning population dynamics from temporal snapshots is a ubiquitous inverse problem arising when populations can be observed repeatedly but individual trajectories cannot be tracked \citep{tong2020trajectorynet, lavenant2021towards}. In temporal single-cell data, for example, successive measurements describe changing cell populations without revealing the transitions of individual cells \citep{schiebinger2019optimal}. Each snapshot provides samples from a population distribution, while the motion connecting these distributions remains unobserved. Learning dynamics from such data therefore requires a principle \citep{guan2025gradient} for selecting among compatible motions.

Optimal-transport geometry \citep{lott2008some} provides a natural selection principle: among all motions producing the same instantaneous change in a distribution, choose the one with minimum kinetic energy \citep{benamou2000computational,ambrosio2008gradient}. Action Matching (AM) \citep{neklyudov2023action} turns this principle into a learning objective for a prescribed marginal path on a continuous state space. It recovers the canonical velocity field from marginal samples without observing velocities or individual trajectories. Subsequent work extends this variational perspective to optimizing marginal paths \citep{neklyudov2024wasserstein} and learning population mechanics \citep{guan2026call}.

Many population models describe individuals as occupying a finite set of discrete states, such as cell types or epidemiological compartments \citep{pijuan2019single,kermack1927contribution}. A graph can then encode which state transitions are allowed. Discrete Wasserstein geometry provides a transport geometry on such graphs \citep{li2025geometric}. It introduces discrete continuity equations and corresponding kinetic energies for probability flows over the graph \citep{maas2011gradient,mielke2011gradient,chow2012fokker}. Within this framework, the minimum-energy motion associated with a population trajectory can be represented by a scalar potential defined on the graph \citep{chow2019discrete,li2022transport}. Related potential-based constructions have also been used to characterize kinetic-optimal discrete generative flows \citep{shaul2025kinetic}. These results provide the main geometric ingredients needed to formulate a finite-state analogue of Action Matching.

The main remaining challenge is to learn the action directly from samples. In continuous AM \citep{neklyudov2023action}, the kinetic term can be written as an expectation under the marginal distribution and can therefore be estimated from marginal samples. On a graph, however, the kinetic energy depends on an edge mobility determined by the probability masses at both endpoints of each edge. For nonlinear mobilities, these probabilities are unknown, so the kinetic term cannot be evaluated directly from a single marginal sample. A sample-based formulation must therefore handle this dependence on the unknown marginal probabilities while also learning the potential that drives the dynamics.

We address this problem with \emph{Discrete Action Matching} (DAM), a finite-state counterpart of Action Matching based on local density ratios. Density ratios can be estimated without evaluating the normalization constant of a discrete distribution \citep{hyvarinen2007some,dawid2012proper,kanamori2017graph}. Closely related ratios also play the role of discrete scores in discrete score-based generative models \citep{meng2022concrete,lou2024discrete}. Our key observation is that positive $1$-homogeneity allows the mobility on an edge to be factorized into the probability of one endpoint and a function of the density ratio between the two endpoints. The endpoint probability is then supplied implicitly by sampling from the marginal distribution, leaving only the local density ratio to estimate. This factorization makes the discrete action estimable from samples and separates density-ratio estimation from learning the action potential.

We assume that the graph and mobility function are fixed. Given a marginal path, DAM selects the unique current with minimum kinetic energy and uses it to construct a Markov process for trajectory generation. With finitely many observed marginals, we distinguish between two separate questions: how the marginal path is interpolated between observations, and how motion is selected along that path. We specify the interpolation used for training and evaluate it independently using held-out population measurements. DAM addresses the second question by selecting a canonical minimum-energy motion consistent with the prescribed marginal path. Since unpaired marginal observations generally do not determine a unique set of individual trajectories, this canonical motion should be interpreted as a principled reconstruction rather than as recovery of the exact historical paths \citep{lavenant2021towards,guan2024identifying}.

\textbf{Contributions.} Our main contributions are as follows:
\begin{itemize}[leftmargin=*, topsep=0.5mm, itemsep=0.5mm, parsep=0.5mm]
    \item \textbf{Discrete Action Matching.} In \wasyparagraph\ref{subsec:dam_derivation}, we derive a finite-state action-gap identity and show that minimizing the population objective recovers the canonical minimum-kinetic-energy current of a prescribed positive marginal path under a chosen graph transport geometry.

    \item \textbf{Learning from unpaired marginal samples.} We estimate neighboring density ratios with a local objective (\wasyparagraph\ref{subsec:dam_ratio_estimation}), then freeze the ratio model and learn the action potential (\wasyparagraph\ref{subsec:dam_training_sampling}). Both stages use samples from the same positive empirical interpolant.

    \item \textbf{Markov realization.} In \wasyparagraph\ref{subsec:dam_training_sampling}, we construct graph-supported nonnegative transition rates and establish the conditions under which the resulting Markov chain reproduces the prescribed marginal path. Appendix~\ref{app:dam_empirical_and_sampling} gives the derivation and the effect of ratio-estimation errors.
\end{itemize}

In \wasyparagraph\ref{sec:experiments} and Appendix~\ref{app:experiments}, we evaluate DAM on two tasks: reconstruction of observed marginals and interpolation between them. We consider both controlled synthetic dynamics and mouse gastrulation data. Compared with the graph-gradient-flow method of \citet{rancati2026learning}, DAM achieves lower mean marginal errors. We also find that the histogram-based variant of DAM performs comparably to the learned density-ratio variant. As an additional geometric evaluation, we consider transport on sphere and hand meshes, where DAM approximates numerical surface-transport trajectories \citep{lavenant2018surfaces} from paired samples (Appendix~\ref{app:mesh_protocol}).

The remainder of the paper is organized as follows. Section~\ref{sec:background} introduces the observation model, Action Matching, and discrete Wasserstein geometry. Section~\ref{sec:discrete_action_matching} develops DAM and its training and sampling procedures. Section~\ref{sec:related_work} discusses related methods, Section~\ref{sec:experiments} presents the main experiments, and Section~\ref{sec:discussion} discusses limitations and future directions. Detailed derivations and experimental protocols are provided in the corresponding Appendices.

\textbf{Notation.} We use $\Omega$ for a continuous state space and $I=\{1,\ldots,N\}$ for a finite state space. For $t\in[0,1]$, the corresponding marginals are denoted by $q(t)$ and $p(t)$, where $q(t)$ is a density and $p(t)$ is a probability vector with entries $p_i(t)$. We write $\mathcal P_2(\Omega)$ for probability measures with finite second moment and $\mathcal P_+(I)=\{p\in\mathbb R^N:p_i>0,\ \sum_i p_i=1\}$ for the positive probability simplex. The graph $G=(I,E,\omega)$ is connected and undirected, with edge set $E$ and symmetric positive edge weights $\omega_{ij}$. The scalar fields $s(t,\cdot)$ and $S(t,\cdot)$ are continuous and discrete action potentials, respectively. Their gradients $\nabla s$ and $\nabla_G S$ describe the associated continuous velocity and graph edge field; $\operatorname{div}_G$ is the graph divergence (\wasyparagraph\ref{subsec:discrete_wasserstein_background}). The notation $\theta(p)$ denotes edgewise mobility, with entries $\theta(p_i,p_j)$. Figure~\ref{fig:dam_pipeline} uses the shorthand $q_t=q(t)$, $p_t=p(t)$, $s_t=s(t,\cdot)$, and $S_t=S(t,\cdot)$.

\section{Background}
\label{sec:background}

We formulate learning population dynamics as an inverse problem from unpaired temporal marginals and review the geometric principle that selects canonical dynamics for a prescribed marginal path. We first introduce the observation model and distinguish the evolving population distribution from the unobserved trajectories of individual samples (\wasyparagraph\ref{subsec:population_dynamics_problem}). We then review continuous Action Matching (\wasyparagraph\ref{subsec:continuous_action_matching}) and the graph differential operators and discrete Wasserstein geometry needed to transfer this construction to finite state spaces (\wasyparagraph\ref{subsec:discrete_wasserstein_background}). Detailed derivations are provided in Appendix~\ref{sec:ot_continuous_and_discrete}.

\subsection{Learning dynamics from unpaired marginal snapshots}
\label{subsec:population_dynamics_problem}

Let $\gX$ denote the state space of an unknown stochastic process $(X_t)_{t\in[0,1]}$, and let $\rho(t):=\mathcal L(X_t)$ denote its one-time marginal law \citep{e2019applied}. We use $\gX=\Omega$ and write $q(t)$ when the state space is continuous, and use $\gX=I$ and write $p(t)$ when it is finite. The process induces a probability law on trajectories, whereas the collection $(\rho(t))_{t\in[0,1]}$ contains only its one-time marginals. At finitely many observation times $0=t_0<t_1<\cdots<t_K=1$, we observe independent samples
\begin{equation}
    \mathcal D_k:=\{x_k^{(m)}\}_{m=1}^{N_k},
    \qquad
    x_k^{(m)}\overset{\mathrm{i.i.d.}}{\sim}\rho(t_k),
    \label{eq:marginal_snapshot_data}
\end{equation}
with no correspondence between observations collected at different times. This observation model arises naturally when measurements are cross-sectional or destructive, as in temporal single-cell data \citep{schiebinger2019optimal,pijuan2019single}, and more generally whenever population snapshots are available but individual trajectories cannot be tracked \citep{tong2020trajectorynet}. Marginal observations do not determine how probability mass moves between states: different microscopic dynamics can generate the same marginal evolution, and finite snapshots additionally leave the intermediate marginal path unknown \citep{lavenant2021towards,guan2024identifying}. Population-dynamics inference therefore requires structural assumptions, such as path-space regularization \citep{chizat2022trajectory}, Wasserstein gradient flows \citep{bunne2022proximal,terpin2024learning,persiianov2026learning,guan2026gradientflow}, or action-based principles \citep{neklyudov2023action,neklyudov2024wasserstein,guan2026call}.

We seek the \emph{canonical dynamics} selected by a prescribed transport geometry and a sufficiently regular marginal path. With finite observations, this requires a choice of the path between snapshots; our training procedure uses the positive empirical interpolant \citep{albergo2025stochastic} defined in \eqref{eq:declared_interpolant}. The chosen path is part of the modeling assumptions.

A natural way to select canonical dynamics is to choose, among all motions producing the same infinitesimal change of the marginal, the one with minimum kinetic energy. This principle underlies optimal-transport geometry \citep{benamou2000computational,otto2001geometry,ambrosio2008gradient} and, for a prescribed marginal path, Action Matching \citep{neklyudov2023action}. We review this continuous construction next before transferring it to a finite state space.

\subsection{Action Matching on continuous sample spaces}
\label{subsec:continuous_action_matching}

Consider a smooth, uniformly positive path $q(t)$ of probability densities on a compact connected Riemannian domain $(\Omega,g^\Omega)$. The gradient, divergence, and volume element $dx$ are induced by the metric $g^\Omega$ \citep{lee2018introduction}. We use periodic or zero normal probability-flux boundary conditions and the regularity assumptions detailed in Appendix~\ref{subsec:continuous_optimal_transport}. A velocity field $v(t,\cdot)$ generates the marginal evolution through the \emph{continuity equation}
\begin{equation}
    \partial_t q(t)
    +\operatorname{div}\!\left(q(t)v(t)\right)=0,
    \label{eq:background_continuity_equation}
\end{equation}
which expresses conservation of probability mass \citep{benamou2000computational}. The marginal tangent $\partial_t q(t)$ does not determine $v(t)$ uniquely: adding a field with zero $q(t)$-weighted divergence leaves the marginal evolution unchanged. Wasserstein geometry selects the compatible velocity of smallest $L^2(q(t))$ norm \citep{ambrosio2008gradient,lott2008some}. In the smooth positive setting above, this velocity is a gradient field $\nabla s^\star(t)$, with $s^\star(t,\cdot)$ unique up to an additive constant:
\begin{equation}
    \nabla s^\star(t)
    \in
    \argmin_{\substack{v\\
        \partial_t q(t)=-\operatorname{div}(q(t)v)}}
    \frac12\int_\Omega
    g_x^\Omega\!\left(v(x),v(x)\right)q(t,x)\,dx.
    \label{eq:continuous_minimum_kinetic_velocity}
\end{equation}
In the general theory, Wasserstein tangent spaces are defined as the weighted $L^2$ closure of gradient vector fields \citep[Theorem~8.3.1 and Section~8.4]{ambrosio2008gradient}. Under our smoothness assumptions, this closure is not needed explicitly, and tangent velocities can be represented directly through smooth potentials. Appendix~\ref{subsec:continuous_optimal_transport} reviews how this minimum-energy potential representation relates to the weighted Hodge decomposition \citep{schwarz2006hodge} and to dynamical optimal transport.

Action Matching \citep{neklyudov2023action} learns $s^\star$ without observing either $\partial_t q(t)$ or trajectories. For a sufficiently regular candidate potential $s$, define the \emph{action gap}
\begin{equation}
    \operatorname{AG}(s,s^\star)
    :=
    \frac12\int_0^1
    \mathbb E_{x\sim q(t)}
    \left[
        \left\|\nabla s(t,x)-\nabla s^\star(t,x)\right\|_{g_x^\Omega}^2
    \right]dt,
    \label{eq:continuous_action_gap_background}
\end{equation}
where $\|u\|_{g_x^\Omega}^2:=g_x^\Omega(u,u)$. This discrepancy cannot be evaluated directly because $s^\star$ is unknown. Integration by parts in space and time removes the unknown target from the optimization. The Euclidean identity of \citet[Theorem~2.2]{neklyudov2023action} extends under the assumptions above to
$
    \operatorname{AG}(s,s^\star)
    =
    \mathcal L_{\mathrm{AM}}(s)
    -
    \mathcal L_{\mathrm{AM}}(s^\star),
$
where
\begin{align}
    \mathcal L_{\mathrm{AM}}(s)
    := {}&
    \mathbb E_{x\sim q(0)}[s(0,x)]
    -
    \mathbb E_{x\sim q(1)}[s(1,x)]
    \nonumber\\
    &+
    \int_0^1
    \mathbb E_{x\sim q(t)}
    \left[
        \partial_t s(t,x)
        +
        \frac12\|\nabla s(t,x)\|_{g_x^\Omega}^2
    \right]dt.
    \label{eq:continuous_am_background}
\end{align}
Thus, minimizing $\mathcal L_{\mathrm{AM}}$ over admissible potentials recovers the canonical gradient field. Since $q(t)$ enters the objective only through expectations, the loss can be estimated from marginal samples without evaluating the density, its time derivative, or the underlying velocity field \citep{neklyudov2023action}. However, this estimator requires samples from $q(t)$ at the times used to evaluate the time integral. When observations are available only at finitely many snapshots, the intermediate marginals are unknown and cannot be sampled directly.

For our development, the key feature of \eqref{eq:continuous_am_background} is its geometric structure. The construction requires a \emph{gradient} that maps a scalar potential to a local velocity and a density-dependent \emph{inner product} that measures kinetic energy. Discrete Wasserstein geometry provides graph counterparts of these objects \citep{maas2011gradient,chow2012fokker,mielke2011gradient}. We introduce them next.

\subsection{Graph Wasserstein geometry and canonical discrete dynamics}
\label{subsec:discrete_wasserstein_background}

We now specialize to a finite state space $I=\{1,\ldots,N\}$ equipped with a connected, undirected, weighted graph $G=(I,E,\omega)$ without self-loops \citep{li2016study}. For every edge $\{i,j\}\in E$, the weights are symmetric and positive, $\omega_{ij}=\omega_{ji}>0$, and $\mathcal N(i)$ denotes the set of neighbors of node $i$. The graph determines which pairs of states can exchange probability directly. We consider probability distributions in the positive simplex
$
\mathcal P_+(I)
:=
\left\{
p\in\mathbb R^N:
p_i>0,\ \sum_{i=1}^N p_i=1
\right\}.
$
Throughout, $E$ denotes the set of unordered edges, so each edge is counted once in sums over $E$. For a scalar potential $S\in\mathbb R^N$ and a skew-symmetric edge field $v$, with $v_{ij}=-v_{ji}$, define
\begin{align}
    (\nabla_G S)_{ij}
    &:=
    \sqrt{\omega_{ij}}(S_i-S_j),
    \label{eq:background_graph_gradient}\\
    (\operatorname{div}_G v)_i
    &:=
    \sum_{j\in\mathcal N(i)}
    \sqrt{\omega_{ij}}\,v_{ji}.
    \label{eq:background_graph_divergence}
\end{align}
With this sign convention, $-\operatorname{div}_G$ is the adjoint of $\nabla_G$ under the unweighted vertex and edge pairings \citep{chow2019discrete,li2022transport}; see Appendix~\ref{subsec:discrete_optimal_transport}.

The remaining ingredient is the counterpart of multiplication by the continuous density. Probability masses live on vertices, whereas velocities live on edges, so a graph transport geometry assigns an effective \emph{mobility} to each edge from the masses at its endpoints \citep{maas2011gradient,mielke2011gradient}. We use a symmetric mobility $\theta:[0,\infty)^2\to[0,\infty)$ that is smooth and strictly positive on $(0,\infty)^2$ and positively $1$-homogeneous:
$
    \theta(\lambda a,\lambda b)
    =
    \lambda\theta(a,b),
    \lambda>0.
$
The full adopted assumptions and the arithmetic, geometric, harmonic, and logarithmic examples are given in Definition~\ref{def:admissible_mean}. Different choices of $\theta$ generally induce different transport geometries. Write $\theta_{ij}(p):=\theta(p_i,p_j)$ and interpret $\theta(p)v$ edgewise. The mobility-weighted edge inner product and its norm are
\begin{equation*}
    \langle v,w\rangle_{p,\theta}
    :=
    \sum_{\{i,j\}\in E}\theta_{ij}(p)v_{ij}w_{ij},
    \qquad
    \|v\|_{p,\theta}^2:=\langle v,v\rangle_{p,\theta}.
\end{equation*}
The \emph{graph continuity equation} for a potential field is
\begin{equation}
    \dot p(t)
    =
    -\operatorname{div}_G
    \!\left(\theta(p(t))\nabla_G S(t)\right)
    =:
    L_\theta(p(t))S(t),
    \label{eq:background_discrete_continuity}
\end{equation}
or, componentwise,
\begin{equation}
    \dot p_i(t)
    =
    \sum_{j\in\mathcal N(i)}
    \omega_{ij}\theta(p_i(t),p_j(t))
    \bigl(S_i(t)-S_j(t)\bigr).
    \label{eq:background_discrete_continuity_coordinates}
\end{equation}
The corresponding kinetic energy is
\begin{equation}
    \mathcal K_{G,\theta}(p,S)
    :=
    \frac12\|\nabla_G S\|_{p,\theta}^2
    =
    \frac12\sum_{\{i,j\}\in E}
    \omega_{ij}\theta(p_i,p_j)(S_i-S_j)^2.
    \label{eq:background_discrete_kinetic_energy}
\end{equation}
These operators form the basic differential structure used in finite-state optimal transport \citep{maas2011gradient,mielke2011gradient,chow2012fokker}. In our notation, $p_i$ denotes the probability mass assigned to state $i$. Some reversible Markov-chain formulations instead use densities relative to a stationary distribution; Appendix~\ref{subsec:wasserstein_geometry_on_simplex} explains the correspondence between these two parameterizations.

The representation of marginal tangents by graph potentials is standard in discrete Wasserstein geometry \citep[Section~3]{maas2011gradient}. For a positive distribution $p$ on a connected graph, the mobility-weighted Laplacian $L_\theta(p)$ is positive semidefinite, its kernel is $\operatorname{span}\{\mathbf 1\}$, and its range is the tangent space
$
    T_p\mathcal P_+(I)
    =
    \{\sigma\in\mathbb R^N:\mathbf 1^\top\sigma=0\}.
$
Hence every tangent vector $\sigma\in T_p\mathcal P_+(I)$ can be represented by a potential $S^\star$, unique up to an additive constant, such that
\begin{equation}
    \sigma
    =
    L_\theta(p)S^\star
    =
    -\operatorname{div}_G
    \!\left(\theta(p)\nabla_G S^\star\right).
    \label{eq:canonical_discrete_tangent}
\end{equation}
Among all edge fields that realize the same tangent $\sigma$, the gradient field $\nabla_G S^\star$ has minimum kinetic energy \citep[Lemma~3.6]{maas2011gradient}; see also \citet[Lemma~4]{chow2019discrete}:
\begin{equation}
    \nabla_G S^\star
    \in
    \argmin_{\substack{v_{ij}=-v_{ji}\\
        \sigma=-\operatorname{div}_G(\theta(p)v)}}
    \frac12\|v\|_{p,\theta}^2.
    \label{eq:canonical_discrete_minimum_energy}
\end{equation}
Appendix~\ref{subsec:wasserstein_geometry_on_simplex} reviews this construction in our notation, including the graph-Laplacian representation and the continuous--discrete operator dictionary in Table~\ref{tab:operator_dictionary}. Therefore, along any smooth positive probability path $p(t)$, there is a canonical potential $S^\star(t)$, unique up to a time-dependent additive constant, satisfying
\begin{equation}
    \dot p(t)
    =
    L_\theta(p(t))S^\star(t).
    \label{eq:canonical_discrete_path}
\end{equation}

This provides the geometric basis for a finite-state counterpart of Action Matching. The main additional challenge is statistical: the discrete kinetic energy depends on the unknown endpoint masses through $\theta(p_i(t),p_j(t))$. For nonlinear mobilities, this prevents an edge contribution from being evaluated directly from a single marginal sample. In \wasyparagraph\ref{sec:discrete_action_matching}, we use positive $1$-homogeneity to rewrite this dependence in terms of neighboring density ratios, leading to a sample-based action objective.

\section{Method}
\label{sec:discrete_action_matching}

We first derive the discrete action objective (\wasyparagraph\ref{subsec:dam_derivation}) and describe density-ratio estimation along an empirical interpolant (\wasyparagraph\ref{subsec:dam_ratio_estimation}). We then learn the action potential with the ratio model held fixed and construct a Markov realization of the dynamics (\wasyparagraph\ref{subsec:dam_training_sampling}).

\subsection{Discrete Action Matching}
\label{subsec:dam_derivation}

Let $p\in C^2([0,1];\mathcal P_+(I))$ be a prescribed probability path on a fixed connected graph $G$, and let the symmetric mobility be smooth and positive in the interior of the simplex. By \eqref{eq:canonical_discrete_tangent}, each marginal velocity $\dot p(t)$ admits a canonical potential $S^\star(t)\in\mathbb R^N$, unique up to a time-dependent additive constant, such that
\eqref{eq:canonical_discrete_path} holds. Equivalently, $\nabla_G S^\star(t)$ is the minimum-kinetic-energy edge field that realizes the marginal velocity $\dot p(t)$. Our goal is to recover this canonical potential from marginal samples, without observing either the marginal velocities $\dot p(t)$ or individual trajectories. Following continuous Action Matching \citep{neklyudov2023action}, for any candidate potential $S\in C^1([0,1];\mathbb R^N)$ we measure its discrepancy from $S^\star$ by the \emph{discrete action gap}
\begin{equation}
    \operatorname{AG}_{G,\theta}(S,S^\star)
    :=
    \frac12\int_0^1
    \left\|
        \nabla_G S(t)-\nabla_G S^\star(t)
    \right\|_{p(t),\theta}^2\,dt,
    \label{eq:dam_action_gap}
\end{equation}
where $\|\cdot\|_{p(t),\theta}$ is the mobility-weighted edge norm defined in \wasyparagraph\ref{sec:background}. This objective is not directly usable for learning because the target potential $S^\star$ is unknown. However, graph summation by parts and integration by parts in time allow us to rewrite the action gap, up to terms independent of $S$, without explicit dependence on $S^\star$.

\begin{theorem}[Discrete Action Matching]
    \label{thm:discrete_action_matching}
    Fix a connected undirected graph with symmetric positive edge weights and a symmetric mobility $\theta$ that is smooth and strictly positive on $(0,\infty)^2$. Let $p\in C^2([0,1];\mathcal P_+(I))$ and choose
    $
        S^\star(t)
        =
        L_\theta(p(t))^\dagger\dot p(t),
    $
    where $\dagger$ denotes the Moore--Penrose pseudoinverse \citep{moore1920reciprocal, penrose1955generalized}. This is the zero-mean canonical potential; the constant rank of $L_\theta(p(t))$ gives $S^\star\in C^1([0,1];\mathbb R^N)$. For $S\in C^1([0,1];\mathbb R^N)$, define
    \begin{align}
        \mathcal L_{\mathrm{DAM}}(S)
        := {}&
        \mathbb E_{i\sim p(0)}[S(0,i)]
        -
        \mathbb E_{i\sim p(1)}[S(1,i)]
        \nonumber\\
        &+
        \int_0^1
        \left\{
            \mathbb E_{i\sim p(t)}[\partial_t S(t,i)]
            +
            \mathcal K_{G,\theta}(p(t),S(t))
        \right\}dt,
        \label{eq:dam_population_loss}
    \end{align}
    where the kinetic energy $\mathcal K_{G,\theta}(p,S)$ defined at \eqref{eq:background_discrete_kinetic_energy}. Then
    $
        \mathcal L_{\mathrm{DAM}}(S)
        -
        \mathcal L_{\mathrm{DAM}}(S^\star)
        =
        \operatorname{AG}_{G,\theta}(S,S^\star).
    $
    Consequently, the minimizers over $C^1([0,1];\mathbb R^N)$ are precisely
    $
        S(t)=S^\star(t)+c(t)\mathbf1,
        \, c\in C^1([0,1]),
    $
    with equality at every $t\in[0,1]$.
\end{theorem}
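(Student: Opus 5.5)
The plan is to mimic the Euclidean Action Matching argument, with graph summation by parts replacing spatial integration by parts. First I would handle regularity. On $\mathcal P_+(I)$, $L_\theta(p)$ has kernel $\operatorname{span}\{\mathbf 1\}$ and range $\mathbf 1^\perp$, so its rank is constant and equal to $N-1$. The matrix $L_\theta(p(t))$ is symmetric with entries $C^1$ in $t$: it is built from $\theta$ composed with the $C^2$ path $p$, and $p$ stays in a compact subset of $(0,\infty)^N$. Since $P:=I-\frac1N\mathbf1\mathbf1^\top$ is fixed, I can write $L^\dagger=(L+\frac1N\mathbf1\mathbf1^\top)^{-1}-\frac1N\mathbf1\mathbf1^\top$. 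This is $C^1$ because the inverted matrix is invertible and $C^1$. Since $\dot p\in C^1$ and $\mathbf1^\top\dot p=0$, it follows that $S^\star\in C^1$ and $L_\theta(p)S^\star=\dot p$.

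Next comes the key identity. For any $S,T\in\mathbb R^N$, graph summation by parts (the adjointness of $\nabla_G$ and $-\operatorname{div}_G$) gives
\begin{equation*}
\langle\nabla_G S,\nabla_G T\rangle_{p,\theta}=\sum_{\{i,j\}\in E}\omega_{ij}\theta(p_i,p_j)(S_i-S_j)(T_i-T_j)=S^\top L_\theta(p)T .
\end{equation*}
This follows by expanding the edge sum into vertex sums using the coordinate form \eqref{eq:background_discrete_continuity_coordinates}. Applying it with $T=S^\star(t)$ yields $\langle\nabla_G S,\nabla_G S^\star\rangle_{p(t),\theta}=S(t)^\top\dot p(t)$.

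Then I expand the gap:
\begin{equation*}
\operatorname{AG}_{G,\theta}(S,S^\star)=\int_0^1\Bigl\{\mathcal K_{G,\theta}(p,S)-S^\top\dot p+\mathcal K_{G,\theta}(p,S^\star)\Bigr\}dt .
\end{equation*}
Integration by parts in time is valid since $S\in C^1$ and $p\in C^2$. It gives $-\int_0^1 S^\top\dot p\,dt=S(0)^\top p(0)-S(1)^\top p(1)+\int_0^1 p^\top\partial_t S\,dt$. Here $\mathbb E_{i\sim p}[f(i)]=p^\top f$, so the right-hand side is exactly $\mathcal L_{\mathrm{DAM}}(S)$ plus the $S$-independent quantity $C:=\int_0^1\mathcal K_{G,\theta}(p,S^\star)\,dt$. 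To identify $C$, I evaluate the same identity at $S=S^\star$, where the gap vanishes. This gives $0=\mathcal L_{\mathrm{DAM}}(S^\star)+C$, and so $\operatorname{AG}=\mathcal L_{\mathrm{DAM}}(S)-\mathcal L_{\mathrm{DAM}}(S^\star)$.

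For the characterization of minimizers: $\operatorname{AG}\ge0$, so $S^\star$ is a minimizer. $S$ is a minimizer if and only if $\operatorname{AG}=0$. The integrand $t\mapsto\|\nabla_G(S-S^\star)(t)\|^2_{p(t),\theta}$ is continuous and nonnegative, so this holds exactly when the integrand vanishes at every $t$. Positivity of $\omega$ and of $\theta$ on the interior then forces $S_i-S_j=S^\star_i-S^\star_j$ on every edge. Connectivity gives $S(t)-S^\star(t)=c(t)\mathbf1$, and $c(t)=\frac1N\mathbf1^\top(S(t)-S^\star(t))$ is $C^1$. Conversely, any such $S$ has zero gap. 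The main (mild) obstacle is the $C^1$ regularity of the pseudoinverse; the rest is bookkeeping of signs in the summation-by-parts identity.
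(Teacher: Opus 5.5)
Your proposal is correct and follows the same route as the paper's appendix proof. Both expand the square, rewrite the cross term as $S^\top\dot p$ via mobility-weighted summation by parts together with $\dot p=L_\theta(p)S^\star$, integrate by parts in time, and identify the $S$-independent constant by evaluating at $S^\star$. Your formula $L^\dagger=(L+\frac1N\mathbf 1\mathbf 1^\top)^{-1}-\frac1N\mathbf 1\mathbf 1^\top$ for the $C^1$ regularity of $S^\star$, and your continuity argument for the minimizer set, supply details the paper asserts without proof.
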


The proof in Appendix~\ref{subsec:wasserstein_geometry_on_simplex} is the graph counterpart of the continuous Action Matching identity of \citet[Theorem~2.2]{neklyudov2023action}. For a fixed probability path and graph geometry, the theorem characterizes the population-level optimum over all admissible potentials. In practice, a sufficiently expressive model class and finite-sample optimization provide an approximation to this target.

If the probabilities $p(t)$ are known, the population loss in \eqref{eq:dam_population_loss} can be evaluated directly. When only marginal samples are available, the boundary and time-derivative terms remain directly estimable because they are expectations under the corresponding marginals. The kinetic term is more difficult: for nonlinear mobilities, each edge contribution depends on $\theta(p_i(t),p_j(t))$, and hence on the unknown probability masses at both endpoints.

For sample-based evaluation, assume additionally that $\theta$ is positively $1$-homogeneous. With $\kappa_\theta(u):=\theta(1,e^u)$, homogeneity gives $\theta(p_i,p_j)=p_i\kappa_\theta(\log(p_j/p_i))$, hence
\begin{equation}
    \mathcal K_{G,\theta}(p,S)
    =\frac14\mathbb E_{i\sim p}
    \left[\sum_{j\in\mathcal N(i)}
        \omega_{ij}\kappa_\theta\!\left(\log\frac{p_j}{p_i}\right)
        (S_i-S_j)^2\right].
    \label{eq:dam_sample_local_kinetic}
\end{equation}
Thus local ratio estimates suffice to evaluate the action without reconstructing the full probability vector.\footnote{For arithmetic mobility, the kinetic action admits a ratio-free sample estimator. Our Markov realization still uses density ratios; see Appendix~\ref{app:ratios_in_dam}.} Appendix~\ref{app:ratios_in_dam} derives this representation, including the factor $1/4$.

\subsection{Learning local density ratios}
\label{subsec:dam_ratio_estimation}

We estimate neighboring ratios through a scalar log-density field, following normalization-free estimation on discrete spaces \citep{hyvarinen2007some,kanamori2017graph}. The field $U_\phi(t,i)$ targets $U^\star(t,i)=\log p_i(t)+c(t)$. Representing ratios through a single node potential automatically enforces reciprocity and cycle consistency (Appendix~\ref{app:discrete_scores}). We parameterize the action potential $S_\eta$ separately, so density-ratio estimation and learning the population dynamics remain distinct.

To obtain samples at intermediate times from finitely many snapshots, let $\widehat p_k$ denote the empirical distribution of $\mathcal D_k$, let $r$ be the uniform reference distribution with $r_i=1/N$, and fix $0<\delta<1$. For $t\in[t_k,t_{k+1}]$, define $u=(t-t_k)/(t_{k+1}-t_k)$ and $h(u)=3u^2-2u^3$ and set
\begin{equation}
    \widetilde p(t)
    =
    (1-\delta)
    \bigl[(1-h(u))\widehat p_k+h(u)\widehat p_{k+1}\bigr]
    +\delta r.
    \label{eq:declared_interpolant}
\end{equation}
Samples from $\widetilde p(t)$ can be generated directly from the two neighboring snapshots, without pairing observations across them. The reference mixture also guarantees full support, $\widetilde p_i(t)\geq\delta/N$. Both stages of DAM are trained on this same interpolated path, including the reference-mixed snapshot distributions at its endpoints. The interpolation and reference mixing therefore provide a well-defined population path for training from finite snapshots, while leaving the unknown intermediate population evolution to be assessed empirically. The resulting path is $C^1$ and piecewise smooth, and the action identity applies interval by interval as detailed in Appendix~\ref{app:dam_empirical_and_sampling}.

Using fresh samples $T_b\sim\operatorname{Unif}[0,1]$ and $I_b\sim\widetilde p(T_b)$, we minimize
\begin{equation}
    \widehat{\mathcal R}(\phi)
    :=
    \frac1B\sum_{b=1}^B\sum_{j\in\mathcal N(I_b)}
    \omega_{I_bj}\operatorname{softplus}
    \!\left(U_\phi(T_b,j)-U_\phi(T_b,I_b)\right),
    \label{eq:dam_empirical_ratio_loss}
\end{equation}
where $\operatorname{softplus}(z):=\log(1+e^z)$. For any positive marginal $p$, the population minimizers are exactly $U=\log p+c\mathbf 1$ (Appendix~\ref{app:ratios_in_dam}, \eqref{eq:softplus_ratio_argmin}). Therefore, if the model class can represent these minimizers for all times and the population objective is minimized exactly, the time-integrated loss recovers the neighboring density ratios of $\widetilde p(t)$ for almost every $t$. If the fitted ratios are continuous in time, this equality extends to all $t\in[0,1]$. This result characterizes the population target associated with the chosen interpolated path; finite-sample estimation of that target is studied empirically.

\subsection{Training and Markov realization}
\label{subsec:dam_training_sampling}

After fitting $U_\phi$, we freeze $\phi$ and optimize $S_\eta$, using the estimated local density ratios to evaluate the prescribed mobility from samples. Define
\begin{equation}
    \widehat{\mathcal K}_{\eta,\phi}(t,i)
    :=\frac14\sum_{j\in\mathcal N(i)}
    \omega_{ij}\kappa_\theta\!\left(U_\phi(t,j)-U_\phi(t,i)\right)
    \bigl(S_\eta(t,j)-S_\eta(t,i)\bigr)^2.
    \label{eq:dam_empirical_local_kinetic}
\end{equation}
With fresh $(T_b,I_b)$ and independent endpoint samples $I_b^0\sim\widetilde p(0)$ and $I_b^1\sim\widetilde p(1)$, minimize
\begin{equation}
    \widehat{\mathcal L}_{\mathrm{DAM}}(\eta;\phi)
    :=\frac1B\sum_{b=1}^B
    \Bigl[S_\eta(0,I_b^0)-S_\eta(1,I_b^1)
        +\partial_t S_\eta(T_b,I_b)
        +\widehat{\mathcal K}_{\eta,\phi}(T_b,I_b)\Bigr].
    \label{eq:dam_empirical_action_loss}
\end{equation}
Time derivatives use automatic differentiation in PyTorch \citep{paszke2019pytorch}. Conditional on the data and frozen $U_\phi$, this estimator is unbiased at fixed $S_\eta$ for the action with the effective positive symmetric conductances in \eqref{eq:perturbed_conductance}. Exact ratios recover the intended-mobility objective. Appendix~\ref{app:dam_empirical_and_sampling} gives regularity and estimation details.

For sampling, let $\psi_\varepsilon(z):=\varepsilon\log(1+e^{z/\varepsilon})$, $\varepsilon>0$, so $\psi_\varepsilon(z)-\psi_\varepsilon(-z)=z$. For neighboring states $i\ne j$, set
\begin{equation}
    Q_t^{\eta,\phi}(i,j)
    :=\omega_{ij}\kappa_\theta\!\left(U_\phi(t,j)-U_\phi(t,i)\right)
    \psi_\varepsilon\!\left(S_\eta(t,j)-S_\eta(t,i)\right).
    \label{eq:dam_generator_offdiag}
\end{equation}
Other off-diagonal entries vanish and $Q_t^{\eta,\phi}(i,i):=-\sum_{j\ne i}Q_t^{\eta,\phi}(i,j)$. Finite continuous fields give a bounded graph-supported Markov generator \citep{anderson2012continuous}. Its column-vector marginal law satisfies $\dot\mu=Q_t^\top\mu$ \citep{pauline2025foundations}. At exact ratios for a positive target path $p$, $Q_t^\top p=L_\theta(p)S_\eta$. If additionally $S_\eta$ is canonical and $\mu(0)=p(0)$, uniqueness gives $\mu(t)=p(t)$ throughout. The split selects one of multiple compatible Markov realizations.

Imperfect ratios preserve generator validity but can separate its current from the effective training current, even at zero potential difference (Appendix~\ref{app:dam_empirical_and_sampling}). For finite snapshots, ideal reproduction concerns the declared $\widetilde p$; approximate fits generally yield a different law $\mu$.

\section{Related Work}
\label{sec:related_work}

\paragraph{Action Matching.} Action Matching learns continuous population dynamics from independent temporal marginals through a sample-based variational objective \citep{neklyudov2023action}. Subsequent work extends this principle to Wasserstein Lagrangian Flows \citep{neklyudov2024wasserstein}, parameter-dependent dynamics \citep{berman2024parametric}, and more general Wasserstein mechanics \citep{guan2026call}. DAM transfers the fixed-path Action Matching principle to finite state spaces using mobility-dependent discrete Wasserstein geometry.

\paragraph{Discrete Wasserstein geometry and gradient-flow learning.} Discrete transport geometries on graphs provide finite-state analogues of Wasserstein continuity equations, kinetic energies, and gradient flows \citep{maas2011gradient,chow2012fokker,erbar2014gradient}. These constructions have subsequently been developed as Riemannian geometries on the probability simplex and used to study discrete gradient-flow dynamics \citep{chow2018entropy,li2018natural,li2022transport}. Most closely related to our work, \citet{rancati2026learning} learn finite-state population dynamics by assuming an autonomous free-energy gradient flow and recovering its energy through discrete-JKO optimality conditions, analogous to the continuous inverse-JKO approach of \citet{terpin2024learning}. Their method requires numerical computation of discrete-Wasserstein geodesics between observed marginals. DAM addresses a complementary inverse problem: it learns the canonical, potentially time-dependent velocity of a prescribed marginal evolution and uses local density ratios to make the corresponding action directly estimable from samples without discrete-geodesic preprocessing.

Further connections to discrete transport, inverse gradient-flow learning, and discrete generative models are discussed in Appendix~\ref{app:extended_related_work}.

\section{Experiments}
\label{sec:experiments}

\begin{figure}[tbp]
    \centering
    \includegraphics[width=\linewidth]{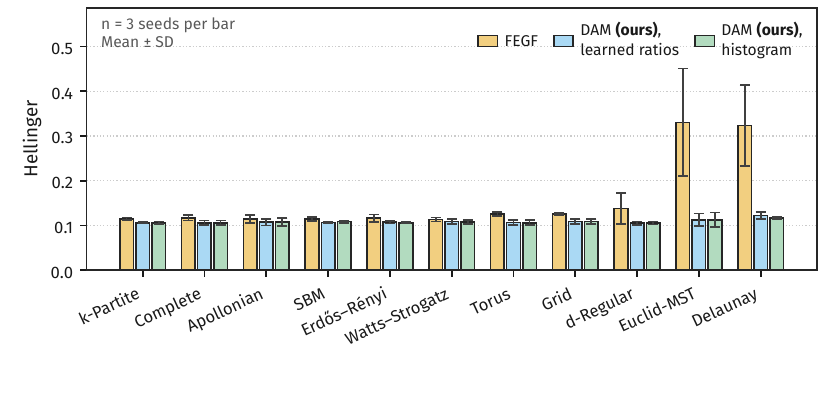}
    \vspace{-13mm}
    \caption{Synthetic reconstruction across eleven graph families with $N=60$, $M=1{,}000$ samples per split and time, and $\beta=0.1$. DAM uses logarithmic mobility; lower is better.}
    \vspace{-5mm}
    \label{fig:rancati_topology}
\end{figure}

We adapt the synthetic and mouse-gastrulation benchmarks of \citet{rancati2026learning} and compare DAM with their free-energy gradient-flow method (FEGF), starting from the official implementation.\footnote{\url{https://github.com/dariorancati/discrete-gradient-flow}} Methods receive matched sampled inputs, with validation restricted to retained observation times. Learned-ratio and histogram DAM share the positive empirical interpolant and action-training settings. Figures report mean Hellinger errors and sample standard deviations across three seeds. Protocol adaptations, including FEGF's time-indexed gastrulation potential and the methods' different transport geometries, are detailed in Appendix~\ref{app:implementation}.

\subsection{Synthetic graph dynamics}
\label{sec:synthetic_experiment}

\paragraph{Setup.} We retain five unpaired marginal snapshots from synthetic free-energy dynamics on eleven graph families. Each model is initialized from the empirical training distribution at the first snapshot and then propagated continuously to all subsequent observation times, without reinitialization at intermediate snapshots. We compare the predicted marginals with independent test histograms at the remaining four observation times and average the resulting errors. This evaluates how well each method reconstructs the observed marginals from a single forward rollout. 

\paragraph{Discussion.} With logarithmic mobility, both DAM variants achieve lower mean marginal errors than FEGF on all eleven graph families in the base experimental setting (Figure~\ref{fig:rancati_topology}). The advantage largely persists when varying entropy strength, graph size, and sample budget. The histogram-based and learned-ratio variants of DAM achieve comparable errors when results are pooled across these settings (Appendix~\ref{app:fegf_ablations}).

\subsection{Mouse gastrulation}
\label{sec:gastrulation_experiment}

\paragraph{Setup.} The gastrulation benchmark contains $34$ cell types observed at nine developmental stages from the atlas of \citet{pijuan2019single}. We construct a lineage-weighted graph with additional weak background connections. To evaluate interpolation, we leave out each interior stage in turn and refit the model using the remaining eight stages, including observations both before and after the omitted stage. Starting from the preceding retained empirical marginal, we then propagate the model to the omitted time. The held-out, pseudocount-smoothed atlas marginal is used only for evaluation and is not included in training. 

\paragraph{Discussion.} Both DAM variants achieve lower mean interpolation error than FEGF (Figure~\ref{fig:gastrulation_loo}). Among the four mobility choices considered, harmonic mobility gives the lowest mean errors for DAM. Full results are reported in Table~\ref{tab:mobility_closure}. Variation across seeds reflects resampling and optimization on the same underlying atlas. The learned-ratio variant performs comparably to histogram-based DAM, supporting density-ratio estimation as a more scalable alternative when explicit marginal histograms become difficult to estimate. Appendix~\ref{app:experiments} provides the full experimental protocol and supplementary mesh experiments.

\begin{figure}[tbp]
    \centering
    \includegraphics[width=\linewidth]{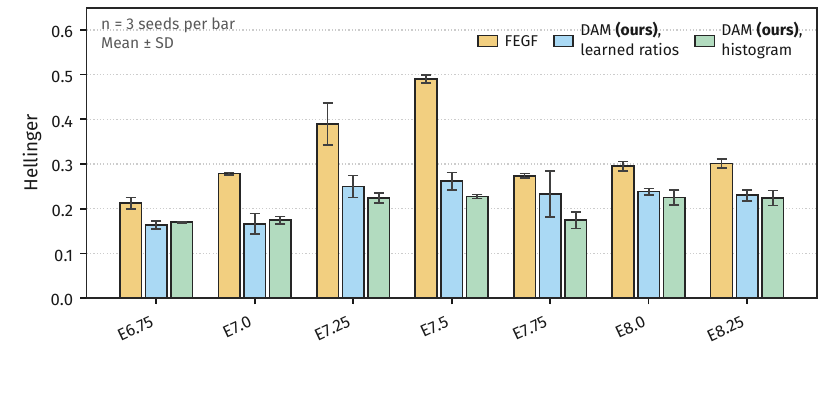}
    \vspace{-14mm}
    \caption{Gastrulation interpolation against omitted smoothed atlas marginals. DAM uses harmonic mobility; lower is better.}
    \vspace{-4mm}
    \label{fig:gastrulation_loo}
\end{figure}

\section{Discussion \& Future Work}
\label{sec:discussion}

\paragraph{Limitations.}
Temporal marginals generally do not determine a unique microscopic history \citep{lavenant2021towards,guan2024identifying}. DAM therefore selects a principled canonical solution: the minimum-action current determined by the prescribed graph geometry and mobility. At the trajectory level, this signed current can admit multiple compatible Markov realizations. Our smooth positive split provides one such realization with the correct marginal evolution, while alternative decompositions offer additional flexibility in how microscopic paths are represented.

\paragraph{Toward discrete Wasserstein Lagrangian Flows.}
A natural extension is a discrete counterpart of Wasserstein Lagrangian Flows \citep{neklyudov2024wasserstein}. While Action Matching selects the canonical velocity of a prescribed path \citep{neklyudov2023action}, Wasserstein Lagrangian Flows additionally optimize the interpolation between finitely observed marginals. Combining this principle with DAM would require a parameterized marginal path and an outer objective enforcing observation constraints, replacing our predefined empirical interpolant with a variationally selected path. Conditional and unbalanced \citep{chizat2017unbalanced} extensions for cortical biomarkers \citep{huang2024cortical} with changing total burden provide another future direction (see extended discussion in Appendix~\ref{app:cortical_prediction}).

\paragraph{Relation to free-energy models.} DAM complements methods such as FEGF \citep{rancati2026learning}, which assume that the evolution is generated by a single autonomous free-energy functional. DAM instead allows a time-dependent potential and therefore applies to more general prescribed marginal evolutions. We hope that proposed local density-ratio estimation provides a route for transferring further Wasserstein variational methods from continuous to discrete population dynamics.

\section*{AI use statement}

Generative AI tools, including Codex and Claude, assisted with literature searches, code development and debugging, mathematical derivations, and manuscript drafting and editing. The authors reviewed and verified the AI-assisted content and take full responsibility for the theoretical claims, implementation, experimental results, and final manuscript.

\section*{Ethics statement}

This work develops methods for learning population dynamics from temporal snapshots on finite state spaces. The reported experiments use synthetic data, existing mouse gastrulation atlas data, and samples generated from numerical transport on surface meshes. No human participants were recruited and no new biological data were collected for this study. The inferred dynamics are model-dependent descriptions of population change and do not establish biological causation or clinical validity. Cortical prediction is discussed as a future application.

\section*{Reproducibility statement}

The geometric assumptions and theoretical proofs are provided in Appendix~\ref{sec:ot_continuous_and_discrete}. Appendices~\ref{app:dam_empirical_and_sampling} and~\ref{app:experiments} document the training objectives, empirical interpolation, support treatment, dataset preprocessing, baseline adaptations, configurations, evaluation metrics, and numerical checks.

\bibliography{iclr2027_conference}
\bibliographystyle{iclr2027_conference}

\newpage
\appendix

\section*{Contents}
\vspace{-5mm}
\startcontents[sections]
\printcontents[sections]{l}{1}{\setcounter{tocdepth}{2}}

\section{Continuous and discrete Wasserstein geometry}
\label{sec:ot_continuous_and_discrete}

This section provides the geometric foundations used in Sections~\ref{sec:background} and~\ref{sec:discrete_action_matching}. We follow the dynamical viewpoint on optimal transport, in which transport is formulated as a minimum-action problem over probability paths \citep{benamou2000computational,otto2001geometry,ambrosio2008gradient}. In continuous spaces, this viewpoint yields the formal Riemannian structure of the $2$-Wasserstein space associated with Otto calculus \citep{otto2001geometry,lott2008some}. On finite state spaces, graph gradients, mobility-weighted continuity equations, and kinetic energies give an analogous geometry on the relative interior of the probability simplex \citep{maas2011gradient,chow2012fokker,erbar2014gradient}. Related finite-dimensional differential-geometric formulations are developed by \citet{li2018natural,li2022transport,li2025geometric}.

Our presentation is organized around the distinction between a \emph{global transport problem}, which optimizes the probability path itself, and a \emph{fixed-path tangent problem}, which selects a canonical velocity for a prescribed marginal evolution. The latter is the geometric principle underlying Action Matching \citep{neklyudov2023action}. We first review this distinction in continuous spaces. We then introduce its graph counterpart for a general admissible mobility $\theta$, and finally make the induced Riemannian structure on the probability simplex explicit through a mobility-weighted graph Laplacian. The latter representation also gives the existence, uniqueness, and minimum-action characterization of the canonical discrete tangent used in the main text.

For comprehensive treatments of optimal transport, we refer to \citet{villani2009optimal,santambrogio2015optimal,ambrosio2008gradient}; computational aspects are reviewed by \citet{gabriel2019computational}. Accessible introductions include \citet{figalli2021invitation,peyre2025optimal}. Our continuous calculations use standard notions from Riemannian geometry; see, e.g., \citet{petersen2006riemannian,michor2008topics,lee2018introduction}, or \citet{hauberg2021diffgeom} for a machine-learning-oriented introduction.

\subsection{Optimal transport and Action Matching on continuous sample spaces}
\label{subsec:continuous_optimal_transport}

Let $(\Omega,g^\Omega)$ be a finite-dimensional smooth Riemannian manifold with volume form $dx$. For every $x\in\Omega$, the Riemannian metric
$
    g_x^\Omega:
    T_x\Omega\times T_x\Omega
    \longrightarrow
    \mathbb R
$
is an inner product on the tangent space $T_x\Omega$; we write
\begin{equation*}
    \Vert v\Vert_{g_x^\Omega}^2
    :=
    g_x^\Omega(v,v).
\end{equation*}
We denote by $\mathcal P_2(\Omega)$ the Borel probability measures on $\Omega$ with finite second moment. Whenever a measure is absolutely continuous with respect to $dx$, we identify it with its density. For the displayed differential calculations, take a compact connected smooth Riemannian domain, $q\in C^1([0,1];C^2(\Omega))$ uniformly positive, smooth spatial vector fields, and candidates $s\in C^1([0,1];C^2(\Omega))$. Use periodic conditions or zero normal probability flux; assume the canonical potential has the same regularity whenever its derivatives are invoked. These are sufficient formal assumptions, not a general theorem for every absolutely continuous Wasserstein curve.

\paragraph{Benamou--Brenier and potential velocities.}
Given two endpoint distributions $q^0,q^1\in\mathcal P_2(\Omega)$, the dynamical formulation of optimal transport writes the quadratic Wasserstein distance as
\begin{equation*}
    \frac12
    W_2^2(q^0,q^1)
    =
    \inf_{(q_t,v_t)}
    \frac12
    \int_0^1
    \int_\Omega
    \Vert v_t(x)\Vert_{g_x^\Omega}^2
    q_t(x)\,dx\,dt,
\end{equation*}
where the infimum is over probability paths $t\mapsto q_t$ and velocity fields $v_t$ satisfying the continuity equation
\begin{equation}
    \partial_t q_t
    +
    \operatorname{div}(q_tv_t)
    =
    0,
    \qquad
    q_0=q^0,
    \qquad
    q_1=q^1.
    \label{eq:continuous_continuity}
\end{equation}
This formulation originates with \citet{benamou2000computational}; its relation to absolutely continuous curves in Wasserstein space is developed systematically in \citet{ambrosio2008gradient}. Equation~\eqref{eq:continuous_continuity} expresses conservation of probability mass under the velocity $v_t$; see also \citet[Lemma~4.1.1]{figalli2021invitation}.

The potential form of the optimal velocity can be seen directly from the constrained problem, as reviewed in \citet[Appendix~B]{neklyudov2023action}. Introduce a Lagrange multiplier $s_t:\Omega\to\mathbb R$ for \eqref{eq:continuous_continuity}. Ignoring endpoint terms that do not depend on $v$, the velocity-dependent part of the augmented functional is
\begin{equation*}
    \int_0^1
    \int_\Omega
    \left[
        \frac12
        \Vert v_t\Vert_{g^\Omega}^2
        -
        g_x^\Omega(\nabla s_t,v_t)
        \right]
    q_t\,dx\,dt,
\end{equation*}
where we used the Riemannian integration-by-parts identity
\begin{equation*}
    \int_\Omega
    s\,\operatorname{div}(qv)\,dx
    =
    -
    \int_\Omega
    q\,g_x^\Omega(\nabla s,v)\,dx
\end{equation*}
under the stated boundary assumptions \citep{lee2018introduction}. Completing the square,
\begin{equation*}
    \frac12\Vert v\Vert_{g^\Omega}^2
    -
    g^\Omega(\nabla s,v)
    =
    \frac12
    \Vert v-\nabla s\Vert_{g^\Omega}^2
    -
    \frac12
    \Vert\nabla s\Vert_{g^\Omega}^2,
\end{equation*}
shows that the minimizing velocity is
$
    v_t=\nabla s_t.
$
The corresponding continuity equation is therefore
\begin{equation*}
    \partial_tq_t
    =
    -
    \operatorname{div}
    \left(
    q_t\nabla s_t
    \right).
\end{equation*}
Only differences of the potential matter, so $s_t$ is defined up to an additive time-dependent constant.

\paragraph{The fixed-path tangent problem.}
The Benamou--Brenier problem simultaneously chooses a probability path and a velocity field. Action Matching instead starts from a prescribed marginal path and asks which velocity should represent its instantaneous motion \citep{neklyudov2023action}. This distinction is most transparent through the local Wasserstein tangent problem.

Formally, consider the space of smooth strictly positive densities
$
    \mathcal P_+(\Omega)
    :=
    \left\{
    q\in C^\infty(\Omega):
    q>0,\quad
    \int_\Omega q\,dx=1
    \right\}.
$
Its tangent space consists of zero-mass perturbations,
$
    T_q\mathcal P_+(\Omega)
    =
    \left\{
    \sigma:
    \int_\Omega\sigma(x)\,dx=0
    \right\}.
$
This density-manifold viewpoint is classical in information geometry and optimal transport \citep{lafferty1988density,otto2001geometry,lott2008some}.

For a prescribed tangent $\sigma\in T_q\mathcal P_+(\Omega)$, the equation
$
    \sigma
    =
    -
    \operatorname{div}(qv)
$
does not determine an arbitrary velocity $v$ uniquely. If $w$ satisfies
$
    \operatorname{div}(qw)=0,
$
then $v+w$ generates the same tangent. In the weighted $L^2(q)$ geometry, this ambiguity is resolved by the weighted Helmholtz--Hodge decomposition: the gradient fields form the orthogonal complement of the $q$-divergence-free fields \citep[Lemma~8.4.2]{ambrosio2008gradient}; see also \citet{schwarz2006hodge,chow2018entropy}. Formally,
\begin{equation*}
    v
    =
    \nabla s+w,
    \qquad
    \operatorname{div}(qw)=0,
\end{equation*}
with
$
    \int_\Omega
    q\,
    g_x^\Omega(\nabla s,w)\,dx
    =
    0.
$
Consequently,
\begin{equation}
    \frac12
    \int_\Omega
    q\,\Vert v\Vert_{g^\Omega}^2\,dx
    =
    \frac12
    \int_\Omega
    q\,\Vert\nabla s\Vert_{g^\Omega}^2\,dx
    +
    \frac12
    \int_\Omega
    q\,\Vert w\Vert_{g^\Omega}^2\,dx.
    \label{eq:continuous_hodge_energy}
\end{equation}
The gradient component is therefore the unique minimum-norm representative of the tangent. This motivates the Wasserstein kinetic energy of $\sigma$ at $q$,
\begin{equation*}
    \mathcal K_W(q,\sigma)
    :=
    \inf_{\substack{v\\
            \sigma=-\operatorname{div}(qv)}}
    \frac12
    \int_\Omega
    q(x)
    \Vert v(x)\Vert_{g_x^\Omega}^2\,dx.
\end{equation*}
For absolutely continuous Wasserstein curves, the existence and minimum-norm characterization of the canonical velocity are made rigorous in \citet[Theorem~8.3.1 and Section~8.4]{ambrosio2008gradient}. The minimizer is the gradient representative
\begin{equation*}
    \sigma
    =
    -
    \operatorname{div}
    \left(
    q\nabla s
    \right),
\end{equation*}
and
\begin{equation*}
    \mathcal K_W(q,\sigma)
    =
    \frac12
    \int_\Omega
    q(x)
    \Vert\nabla s(x)\Vert_{g_x^\Omega}^2\,dx.
\end{equation*}
Along an absolutely continuous Wasserstein curve, the same canonical velocity also arises infinitesimally from optimal transport between nearby marginals \citep[Proposition~8.4.6]{ambrosio2008gradient}; see also \citet[Appendix~B.1]{neklyudov2023action}.

\paragraph{Wasserstein metric and path action.}
The map
\begin{equation*}
    C^\infty(\Omega)/\mathbb R
    \longrightarrow
    T_q\mathcal P_+(\Omega),
    \qquad
    [s]
    \longmapsto
    -\operatorname{div}(q\nabla s),
\end{equation*}
identifies potentials modulo constants with tangent directions under suitable regularity assumptions \citep{otto2001geometry,lott2008some}. If
\begin{equation*}
    \sigma_s=-\operatorname{div}(q\nabla s),
    \qquad
    \sigma_r=-\operatorname{div}(q\nabla r),
\end{equation*}
the formal Wasserstein inner product is
\begin{equation*}
    g_q^W(\sigma_s,\sigma_r)
    :=
    \int_\Omega
    q(x)\,
    g_x^\Omega
    \left(
    \nabla s(x),
    \nabla r(x)
    \right)
    dx.
\end{equation*}
Thus
$
    \mathcal K_W(q,\sigma)
    =
    \frac12
    g_q^W(\sigma,\sigma).
$
The family $g^W=\{g_q^W\}_q$ is the formal Riemannian structure underlying Otto calculus \citep{otto2001geometry,lott2008some}; rigorous formulations use Wasserstein tangent spaces, metric derivatives, and absolutely continuous curves \citep{ambrosio2008gradient}.

For a smooth path $q_t$ with canonical potential $s_t$,
\begin{equation*}
    \dot q_t
    =
    -
    \operatorname{div}(q_t\nabla s_t),
\end{equation*}
the corresponding kinetic action is
\begin{equation*}
    \mathcal A_W[q]
    :=
    \int_0^1
    \mathcal K_W(q_t,\dot q_t)\,dt
    =
    \frac12
    \int_0^1
    \int_\Omega
    q_t(x)
    \Vert\nabla s_t(x)\Vert_{g_x^\Omega}^2
    dx\,dt.
\end{equation*}
The Benamou--Brenier problem can therefore be written as
\begin{equation}
    \frac12
    W_2^2(q^0,q^1)
    =
    \inf_{\substack{q_0=q^0\\q_1=q^1}}
    \mathcal A_W[q].
    \label{eq:bb_as_action}
\end{equation}
The local tangent problem selects the canonical velocity of a \emph{given} path, whereas \eqref{eq:bb_as_action} additionally minimizes over the path connecting the endpoints.

\paragraph{Fixed-path Action Matching.}
Action Matching \citep{neklyudov2023action} keeps the complete marginal path $q_t$ fixed and learns its canonical potential $s_t^\star$. By \citet[Theorem~2.1]{neklyudov2023action}, for an absolutely continuous Wasserstein path there exists a unique canonical gradient field $\nabla s_t^\star$ satisfying
\begin{equation}
    \dot q_t
    =
    -
    \operatorname{div}
    \left(
    q_t\nabla s_t^\star
    \right),
    \label{eq:continuous_am_target}
\end{equation}
with $s_t^\star$ unique up to an additive function of time.

For a candidate potential $s$, define the action gap
\begin{equation*}
    \operatorname{AG}_W(s,s^\star)
    :=
    \frac12
    \int_0^1
    \int_\Omega
    q_t(x)
    \left\Vert
    \nabla s_t(x)
    -
    \nabla s_t^\star(x)
    \right\Vert_{g_x^\Omega}^2
    dx\,dt.
\end{equation*}
Expanding the square gives the cross term
$
    \int_\Omega
    q_t
    g^\Omega
    \left(
    \nabla s_t,\nabla s_t^\star
    \right)
    dx.
$
Using spatial integration by parts and \eqref{eq:continuous_am_target},
\begin{equation*}
    \int_\Omega
    q_t
    g^\Omega
    \left(
    \nabla s_t,\nabla s_t^\star
    \right)
    dx
    =
    \int_\Omega
    s_t\,\dot q_t\,dx.
\end{equation*}
Integration by parts in time then yields
\begin{align}
    \int_0^1
    \int_\Omega
    s_t\,\dot q_t\,dx\,dt
    =
    \int_\Omega
    s_1q_1\,dx
    -
    \int_\Omega
    s_0q_0\,dx
      -
    \int_0^1
    \int_\Omega
    q_t\,\partial_ts_t\,dx\,dt.
    \label{eq:continuous_temporal_ibp}
\end{align}
Substituting \eqref{eq:continuous_temporal_ibp} into the expanded action gap gives
$
    \operatorname{AG}_W(s,s^\star)
    =
    \mathcal L_{\mathrm{AM}}(s)
    -
    \mathcal L_{\mathrm{AM}}(s^\star),
$
where
\begin{align}
    \mathcal L_{\mathrm{AM}}(s)
    :=
     &
    \mathbb E_{X\sim q_0}[s_0(X)]
    -
    \mathbb E_{X\sim q_1}[s_1(X)]
    \nonumber \\
     & +
    \int_0^1
    \mathbb E_{X\sim q_t}
    \left[
        \partial_t s_t(X)
        +
        \frac12
        \Vert\nabla s_t(X)\Vert_{g_X^\Omega}^2
        \right]
    dt.
    \label{eq:continuous_am_loss_appendix}
\end{align}
This is the Action Matching identity of \citet[Theorem~2.2]{neklyudov2023action}. Its significance for learning from snapshots is that the prescribed path enters \eqref{eq:continuous_am_loss_appendix} only through expectations under its marginals. The discrete construction below follows the same fixed-path principle, but its kinetic energy contains an additional density-dependent edge mobility.

\subsection{Dynamical optimal transport on discrete sample spaces}
\label{subsec:discrete_optimal_transport}

We now replace the continuous state space by a finite set
$
    I=\{1,\ldots,N\}
$
equipped with a connected undirected weighted graph
$
    G=(I,E,\omega),
$
where $E$ is the set of unoriented edges and
\begin{equation*}
    \omega_{ij}
    =
    \omega_{ji}
    >
    0,
    \qquad
    \{i,j\}\in E.
\end{equation*}
The graph specifies which pairs of states may exchange probability locally. Such graph-based differential structures underlie several formulations of discrete dynamical optimal transport \citep{maas2011gradient,chow2012fokker,mielke2011gradient,li2018natural}. We write
$
    \mathcal N(i)
    :=
    \{j\in I:\{i,j\}\in E\}
$
for the neighborhood of $i$. The probability simplex and its relative interior are
\begin{align}
    \mathcal P(I)
     & :=
    \left\{
    p\in\mathbb R^N:
    p_i\geq0,\;
    \mathbf1^\top p=1
    \right\},
    \\
    \mathcal P_+(I)
     & :=
    \left\{
    p\in\mathcal P(I):
    p_i>0,\ i\in I
    \right\}.
\end{align}

\paragraph{Graph gradient and divergence.}
A graph vector field is a skew-symmetric edge field
\begin{equation*}
    v_{ij}
    =
    -v_{ji},
    \qquad
    \{i,j\}\in E.
\end{equation*}
For a scalar potential $S\in\mathbb R^N$, define the graph gradient
\begin{equation*}
    (\nabla_GS)_{ij}
    :=
    \sqrt{\omega_{ij}}
    (S_i-S_j).
\end{equation*}
This is the weighted finite-difference convention used in graph formulations of discrete Wasserstein geometry \citep{chow2012fokker,li2018natural,li2022transport}. The graph divergence is
\begin{equation*}
    (\operatorname{div}_Gv)_i
    :=
    \sum_{j\in\mathcal N(i)}
    \sqrt{\omega_{ij}}\,
    v_{ji}.
\end{equation*}
With these signs, $-\operatorname{div}_G$ is the adjoint of $\nabla_G$. Indeed, for any potential $S$ and skew-symmetric edge field $v$,
\begin{align}
    S^\top[-\operatorname{div}_Gv]
     & =
    -\sum_{i\in I}
    S_i
    \sum_{j\in\mathcal N(i)}
    \sqrt{\omega_{ij}}\,v_{ji}
    \nonumber \\
     & =
    \frac12
    \sum_{i\in I}
    \sum_{j\in\mathcal N(i)}
    \sqrt{\omega_{ij}}
    (S_i-S_j)v_{ij}
    \nonumber \\
     & =
    \sum_{\{i,j\}\in E}
    (\nabla_GS)_{ij}v_{ij}.
    \label{eq:discrete_integration_by_parts_general}
\end{align}
The factor $1/2$ in the directed double sum compensates for counting every unoriented edge twice. This identity is the finite-graph analogue of integration by parts and appears throughout discrete transport calculus \citep{chow2012fokker,chow2019discrete,rancati2026learning}.

\paragraph{Mobility-weighted transport.}
In the continuous continuity equation, the pointwise density multiplies a velocity defined at the same spatial location. On a graph, probability masses live on vertices while velocities live on edges. A discrete transport geometry must therefore specify how the two endpoint masses determine the amount of probability available for transport across an edge. This role is played by a mobility or mean \citep{maas2011gradient,mielke2011gradient,erbar2019geometry}.

\begin{definition}[Admissible mobility]
    \label{def:admissible_mean}
    We adopt the following sufficient common assumptions, rather than identify them with every discrete transport framework: $\theta:[0,\infty)^2\to[0,\infty)$ is continuous, smooth and strictly positive on $(0,\infty)^2$, symmetric, nondecreasing in each argument, jointly concave, normalized by $\theta(1,1)=1$, and positively homogeneous:
    \begin{equation*}
        \theta(\lambda a,\lambda b)=\lambda\theta(a,b),\qquad\lambda>0.
    \end{equation*}
    We do not require $\theta(0,b)=0$, unlike the zero-boundary class in \citet[Assumption~3.1]{maas2011gradient}; arithmetic mobility is therefore allowed. Connectedness and positive conductances suffice for the tangent representation; temporal regularity supports the action identity; homogeneity enables ratio factorization. The softplus minimizer itself does not depend on the mobility.

\end{definition}

The normalization and homogeneity imply $\theta(a,a)=a$. Standard examples include the arithmetic mean
\begin{equation*}
    \theta^{\mathrm{ari}}(a,b)
    =
    \frac{a+b}{2},
\end{equation*}
the logarithmic mean
\begin{equation*}
    \theta^{\log}(a,b)
    =
    \begin{cases}
        \dfrac{a-b}{\log a-\log b},
         & a\neq b, \\[1ex]
        a,
         & a=b,
    \end{cases}
\end{equation*}
and the geometric mean
\begin{equation*}
    \theta^{\mathrm{geo}}(a,b)
    =
    \sqrt{ab}.
\end{equation*}
We also use the harmonic mean $\theta^{\mathrm{har}}(a,b)=2ab/(a+b)$, extended by zero when either argument vanishes. Different choices induce different discrete transport geometries. In the stationary-relative reversible-chain construction, the logarithmic mean makes the heat flow a gradient flow of relative entropy \citep{maas2011gradient,erbar2012ricci}.

For $p\in\mathcal P_+(I)$, write
$
    \theta_{ij}(p)
    :=
    \theta(p_i,p_j).
$
The mobility-weighted edge pairing is
\begin{equation*}
    \langle v,\widetilde v\rangle_{p,\theta}
    :=
    \sum_{\{i,j\}\in E}
    \theta_{ij}(p)
    v_{ij}\widetilde v_{ij}.
\end{equation*}
In particular,
\begin{equation*}
    \left|
    \nabla_GS
    \right|_{p,\theta}^2
    :=
    \sum_{\{i,j\}\in E}
    \omega_{ij}
    \theta(p_i,p_j)
    (S_i-S_j)^2.
\end{equation*}
Taking
$
    v
    =
    \theta(p)\nabla_G R
$
in \eqref{eq:discrete_integration_by_parts_general} yields the mobility-weighted integration-by-parts identity
\begin{equation}
    \langle
    \nabla_GS,
    \nabla_GR
    \rangle_{p,\theta}
    =
    S^\top
    \left[
        -\operatorname{div}_G
        \left(
        \theta(p)\nabla_GR
        \right)
        \right].
    \label{eq:discrete_integration_by_parts}
\end{equation}
This is the discrete counterpart of
\begin{equation*}
    \int q\,g^\Omega(\nabla s,\nabla r)
    =
    \int
    s[-\operatorname{div}(q\nabla r)].
\end{equation*}

A differentiable probability path $t\mapsto p(t)$ generated by a potential $S(t)$ then satisfies the graph continuity equation
\begin{equation}
    \dot p(t)
    +
    \operatorname{div}_G
    \left(
    \theta(p(t))\nabla_GS(t)
    \right)
    =
    0.
    \label{eq:discrete_continuity}
\end{equation}
In coordinates,
\begin{equation*}
    \dot p_i(t)
    =
    \sum_{j\in\mathcal N(i)}
    \omega_{ij}
    \theta
    \left(
    p_i(t),p_j(t)
    \right)
    \left(
    S_i(t)-S_j(t)
    \right).
\end{equation*}
Graph continuity equations of this form are central to the dynamical transport constructions of \citet{maas2011gradient,chow2012fokker,mielke2011gradient}. Compared with the continuous equation
$\dot q=-\operatorname{div}(q\nabla s)$, the pointwise density is replaced by an edge mobility because the state and velocity variables live on vertices and edges, respectively.

\paragraph{Kinetic action and discrete dynamical transport.}
The instantaneous kinetic energy associated with $p$ and $S$ is
\begin{equation}
    \mathcal K_{G,\theta}(p,S)
    :=
    \frac12
    \left|
    \nabla_GS
    \right|_{p,\theta}^2
    =
    \frac12
    \sum_{\{i,j\}\in E}
    \omega_{ij}
    \theta(p_i,p_j)
    (S_i-S_j)^2.
    \label{eq:discrete_kinetic_energy}
\end{equation}
For a path satisfying \eqref{eq:discrete_continuity}, its kinetic action is
\begin{equation*}
    \mathcal A_{G,\theta}[p,S]
    :=
    \int_0^1
    \mathcal K_{G,\theta}
    \left(
    p(t),S(t)
    \right)
    dt.
\end{equation*}
For positive endpoints $p^0,p^1$, define the interior-path distance using absolutely continuous $p:[0,1]\to\mathcal P_+(I)$ and measurable potentials of finite kinetic action satisfying the continuity equation almost everywhere. The graph analogue of the Benamou--Brenier problem is then
\begin{equation}
    \frac12
    W_{G,\theta}^2(p^0,p^1)
    :=
    \inf_{\substack{
            p,S\\
            \dot p+
            \operatorname{div}_G(\theta(p)\nabla_GS)=0\\
            p(0)=p^0,\;
            p(1)=p^1
        }}
    \mathcal A_{G,\theta}[p,S].
    \label{eq:discrete_dynamic_transport}
\end{equation}
Discrete dynamical transport distances of this type were introduced and studied in various closely related forms by
\citet{maas2011gradient,chow2012fokker,mielke2011gradient}; the precise global behavior near the boundary of the simplex depends on the mobility. Our method uses the induced geometry on $\mathcal P_+(I)$, where all edge mobilities are strictly positive.

\begin{remark}[Relation to static Kantorovich transport]
    \label{rem:discrete_dynamic_vs_kantorovich}
    The distance in \eqref{eq:discrete_dynamic_transport} should not be confused with applying the classical Kantorovich problem directly to the finite metric space $(I,d^G)$. The latter gives
    \begin{equation*}
        W_{\mathrm K}^2(p^0,p^1)
        :=
        \min_{\pi\in\Pi(p^0,p^1)}
        \sum_{i,j\in I}
        d^G(i,j)^2\pi_{ij},
    \end{equation*}
    where $\Pi(p^0,p^1)$ is the set of couplings with the prescribed marginals \citep{kantorovich1942translocation,villani2009optimal}. In general,
    \begin{equation*}
        W_{G,\theta}
        \neq
        W_{\mathrm K}.
    \end{equation*}
    The static Kantorovich problem is determined by the ground cost and a coupling of the endpoints, whereas the dynamical graph construction additionally specifies how probability moves infinitesimally through the graph and depends on the choice of mobility $\theta$
    \citep{maas2011gradient,erbar2014gradient}. It is this latter infinitesimal geometry that is required for the fixed-path Action Matching construction.
\end{remark}

The continuity equation \eqref{eq:discrete_continuity} and kinetic energy \eqref{eq:discrete_kinetic_energy} already contain the two ingredients needed for Action Matching. We next make the resulting tangent-space geometry explicit and show that every tangent direction has a unique minimum-energy gradient representative.

\subsection{Wasserstein geometry on the probability simplex}
\label{subsec:wasserstein_geometry_on_simplex}

The relative interior $\mathcal P_+(I)$ is an $(N-1)$-dimensional smooth manifold with tangent space \citep{amari2000methods,amari2016information,ay2017information,pistone2019information}.
\begin{equation*}
    T_p\mathcal P_+(I)
    =
    \left\{
    \sigma\in\mathbb R^N:
    \mathbf1^\top\sigma=0
    \right\}.
\end{equation*}
Discrete dynamical transport equips this manifold with a density-dependent, mobility-dependent Riemannian metric \citep{maas2011gradient,chow2012fokker,mielke2011gradient,li2022transport,li2025geometric}, rather than the Fisher--Rao geometry commonly considered in information geometry. We derive this transport metric directly for the general mobility $\theta$ used throughout the paper.

\paragraph{Mobility-weighted graph Laplacian.}
Fix an arbitrary orientation of the unoriented edges and let
$
    D\in\mathbb R^{|E|\times N}
$
be the weighted incidence matrix
\begin{equation*}
    D_{e,k}
    =
    \begin{cases}
        \sqrt{\omega_{ij}},
         & k=i,              \\
        -\sqrt{\omega_{ij}},
         & k=j,              \\
        0,
         & \text{otherwise},
    \end{cases}
    \qquad
    e=(i,j).
\end{equation*}
Thus $DS$ is the oriented representation of $\nabla_GS$. For $p\in\mathcal P_+(I)$ define the diagonal mobility matrix
\begin{equation*}
    \Theta_\theta(p)
    :=
    \operatorname{diag}
    \left(
    \theta(p_i,p_j)
    \right)_{\{i,j\}\in E}
\end{equation*}
and the mobility-weighted graph Laplacian
\begin{equation}
    L_\theta(p)
    :=
    D^\top
    \Theta_\theta(p)
    D.
    \label{eq:discrete_weighted_laplacian}
\end{equation}
The operator does not depend on the arbitrary edge orientation. With the sign conventions of Section~\ref{subsec:discrete_optimal_transport},
\begin{equation*}
    L_\theta(p)S
    =
    -
    \operatorname{div}_G
    \left(
    \theta(p)\nabla_GS
    \right).
\end{equation*}
Hence the graph continuity equation can be written compactly as
$
    \dot p(t)
    =
    L_\theta(p(t))S(t).
$

For any $S,R\in\mathbb R^N$,
\begin{align}
    S^\top L_\theta(p)R
     =
    (DS)^\top
    \Theta_\theta(p)
    (DR)
    =
    \langle
    \nabla_GS,
    \nabla_GR
    \rangle_{p,\theta}.
    \label{eq:discrete_laplacian_pairing}
\end{align}
Since $p\in\mathcal P_+(I)$ and $\theta$ is positive on $(0,\infty)^2$, all diagonal entries of $\Theta_\theta(p)$ are strictly positive. Therefore
\begin{equation*}
    S^\top L_\theta(p)S
    =
    \sum_{\{i,j\}\in E}
    \omega_{ij}
    \theta(p_i,p_j)
    (S_i-S_j)^2
    \geq0.
\end{equation*}
Moreover,
$
    S^\top L_\theta(p)S=0
$
if and only if $S_i=S_j$ on every edge. Since $G$ is connected,
\begin{equation}
    \ker L_\theta(p)
    =
    \operatorname{span}\{\mathbf1\}.
    \label{eq:discrete_laplacian_kernel}
\end{equation}
Thus
$
    \operatorname{rank}L_\theta(p)
    =
    N-1
$
and, because $L_\theta(p)\mathbf1=0$,
\begin{equation}
    \operatorname{Ran}L_\theta(p)
    =
    T_p\mathcal P_+(I).
    \label{eq:discrete_laplacian_range}
\end{equation}
This is the finite-dimensional mechanism behind the identification of tangents with discrete gradients in the Maas--Erbar--Chow geometry
\citep{maas2011gradient,chow2019discrete,rancati2026learning}.

Potentials therefore naturally live in the quotient
$
    \mathbb R^N/
    \operatorname{span}\{\mathbf1\},
$
and
$
    [S]
    \longmapsto
    L_\theta(p)S
$
is an isomorphism from this quotient to
$T_p\mathcal P_+(I)$.

\paragraph{Canonical tangent representation and graph Hodge decomposition.}
We review the canonical tangent representation in \eqref{eq:canonical_discrete_tangent} and its minimum-energy characterization in \eqref{eq:canonical_discrete_minimum_energy}. The argument uses the mobility-weighted graph Hodge decomposition \citep[Lemma~3.6]{maas2011gradient}; see also \citet[Lemma~4]{chow2019discrete}.

\begin{proof}[Derivation of the canonical tangent representation]
    Let
    $
        \sigma\in T_p\mathcal P_+(I).
    $
    By \eqref{eq:discrete_laplacian_range}, there exists
    $S^\star\in\mathbb R^N$ such that
    $
        L_\theta(p)S^\star
        =
        \sigma,
    $
    or equivalently,
    \begin{equation}
        \sigma
        =
        -
        \operatorname{div}_G
        \left(
        \theta(p)
        \nabla_GS^\star
        \right).
        \label{eq:appendix_canonical_tangent}
    \end{equation}
    If $R$ is another solution, then
    $
        L_\theta(p)(R-S^\star)=0.
    $
    Equation~\eqref{eq:discrete_laplacian_kernel} implies
    $
        R-S^\star
        =
        c\mathbf1,
    $
    so the potential class $[S^\star]$ and the gradient
    $\nabla_GS^\star$ are unique. It remains to establish the minimum-energy property. Let $v$ be any skew-symmetric edge field satisfying
    \begin{equation}
        \sigma
        =
        -
        \operatorname{div}_G
        \left(
        \theta(p)v
        \right).
        \label{eq:appendix_arbitrary_velocity_constraint}
    \end{equation}
    Define
    $
        u
        :=
        v-\nabla_GS^\star.
    $
    Subtracting \eqref{eq:appendix_canonical_tangent} from
    \eqref{eq:appendix_arbitrary_velocity_constraint} gives
    $
        \operatorname{div}_G
        \left(
        \theta(p)u
        \right)
        =
        0.
    $
    By the mobility-weighted integration-by-parts identity
    \eqref{eq:discrete_integration_by_parts},
    \begin{equation*}
        \left\langle
        \nabla_GS^\star,u
        \right\rangle_{p,\theta}
        =
        S^{\star\top}
        \left[
            -\operatorname{div}_G
            \left(
            \theta(p)u
            \right)
            \right]
        =
        0.
    \end{equation*}
    Therefore
    \begin{align}
        \frac12
        \Vert v\Vert_{p,\theta}^2
         & =
        \frac12
        \Vert
        \nabla_GS^\star+u
        \Vert_{p,\theta}^2
        \nonumber \\
         & =
        \frac12
        \Vert
        \nabla_GS^\star
        \Vert_{p,\theta}^2
        +
        \frac12
        \Vert u\Vert_{p,\theta}^2
        \nonumber \\
         & \geq
        \frac12
        \Vert
        \nabla_GS^\star
        \Vert_{p,\theta}^2.
        \label{eq:appendix_discrete_hodge_energy}
    \end{align}
    Strict positivity of the mobility on $\mathcal P_+(I)$ implies equality only when $u=0$. Hence $\nabla_GS^\star$ is the unique minimum-energy edge velocity inducing $\sigma$.
\end{proof}

Equation~\eqref{eq:appendix_discrete_hodge_energy} is the discrete analogue of the weighted Hodge decomposition
\eqref{eq:continuous_hodge_energy}. In particular, an arbitrary graph velocity can be decomposed as
\begin{equation*}
    v
    =
    \nabla_GS^\star+u,
    \qquad
    \operatorname{div}_G(\theta(p)u)=0,
\end{equation*}
where the second component is invisible to the marginal tangent but contributes additional kinetic energy. This is precisely why the scalar potential in Discrete Action Matching is a geometric selection principle rather than merely a parameterization choice.

\paragraph{Primal representation of the metric.}
Let $L_\theta(p)^\dagger$ denote the Moore--Penrose pseudoinverse. Since $\ker L_\theta(p)=\operatorname{span}\{\mathbf1\}$, the pseudoinverse acts as the inverse of $L_\theta(p)$ on the tangent space. For
\begin{equation*}
    \sigma
    =
    L_\theta(p)S,
    \qquad
    \tau
    =
    L_\theta(p)R,
\end{equation*}
we have
\begin{align}
    \sigma^\top
    L_\theta(p)^\dagger
    \tau
     & =
    S^\top
    L_\theta(p)
    L_\theta(p)^\dagger
    L_\theta(p)
    R
    \nonumber \\
     & =
    S^\top
    L_\theta(p)R
    \nonumber \\
     & =
    \langle
    \nabla_GS,\nabla_GR
    \rangle_{p,\theta}.
    \label{eq:discrete_dual_primal_equivalence}
\end{align}
Thus the same metric can be written either in potential coordinates,
\begin{equation*}
    g_{p,\theta}^G
    \left(
    L_\theta(p)S,
    L_\theta(p)R
    \right)
    :=
    \langle
    \nabla_GS,\nabla_GR
    \rangle_{p,\theta},
\end{equation*}
or directly in tangent coordinates,
\begin{equation*}
    g_{p,\theta}^G(\sigma,\tau)
    :=
    \sigma^\top
    L_\theta(p)^\dagger
    \tau.
\end{equation*}
Because $\theta$ is smooth and positive on the interior and
$L_\theta(p)$ has constant rank $N-1$ on $\mathcal P_+(I)$, this defines a smooth positive-definite inner product on each tangent space. Hence $\mathcal P_+(I)$ carries the finite-dimensional Riemannian structure associated with the graph transport geometry; related constructions and their global metric properties are developed in
\citet{maas2011gradient,mielke2011gradient,li2022transport,li2025geometric}.

The kinetic energy of a tangent vector is therefore
\begin{equation*}
    \mathcal T_{G,\theta}(p,\sigma)
    =
    \frac12
    g_{p,\theta}^G(\sigma,\sigma)
    =
    \frac12
    \sigma^\top
    L_\theta(p)^\dagger
    \sigma,
\end{equation*}
where $\sigma=L_\theta(p)S$. Consequently, the dynamical transport problem \eqref{eq:discrete_dynamic_transport} admits the following primal form over the same interior path class
\begin{equation*}
    \frac12
    W_{G,\theta}^2(p^0,p^1)
    =
    \inf_{\substack{
            p_t\in\mathcal P_+(I)\\
            p_0=p^0,\;
            p_1=p^1
        }}
    \frac12
    \int_0^1
    \dot p_t^\top
    L_\theta(p_t)^\dagger
    \dot p_t\,dt.
\end{equation*}
Thus the graph kinetic action is the Riemannian energy of the probability curve under the metric tensor represented by $L_\theta(p)^\dagger$ on
$T_p\mathcal P_+(I)$.

\paragraph{Discrete Action Matching identity.}
The same structure gives the discrete counterpart of the continuous Action Matching derivation. Use the assumptions and zero-mean gauge of Theorem~\ref{thm:discrete_action_matching}; write $p_t=p(t)$ and $S_t=S(t)$, so subscripts in this proof denote time, not state. Let $S_t^\star=L_\theta(p_t)^\dagger\dot p_t$. Then
\begin{equation}
    \dot p_t
    =
    L_\theta(p_t)S_t^\star
    =
    -
    \operatorname{div}_G
    \left(
    \theta(p_t)
    \nabla_GS_t^\star
    \right).
    \label{eq:appendix_dam_target}
\end{equation}
For a candidate $S_t$, expand the discrete action gap
\begin{align}
    \operatorname{AG}_{G,\theta}(S,S^\star)
    =
    \frac12
    \int_0^1
    \Big(
    \Vert\nabla_GS_t\Vert_{p_t,\theta}^2
    +
    \Vert\nabla_GS_t^\star\Vert_{p_t,\theta}^2
    -
    2
    \langle
    \nabla_GS_t,
    \nabla_GS_t^\star
    \rangle_{p_t,\theta}
    \Big)
    dt.
    \label{eq:appendix_dam_gap_expansion}
\end{align}
Using \eqref{eq:discrete_integration_by_parts} and
\eqref{eq:appendix_dam_target},
\begin{equation*}
    \langle
    \nabla_GS_t,
    \nabla_GS_t^\star
    \rangle_{p_t,\theta}
    =
    S_t^\top\dot p_t.
\end{equation*}
Temporal integration by parts gives
\begin{equation*}
    \int_0^1
    S_t^\top\dot p_t\,dt
    =
    S_1^\top p_1
    -
    S_0^\top p_0
    -
    \int_0^1
    p_t^\top\partial_tS_t\,dt.
\end{equation*}
Substituting into \eqref{eq:appendix_dam_gap_expansion} yields
\begin{align}
    \operatorname{AG}_{G,\theta}(S,S^\star)
    =
    p_0^\top S_0
    &-
    p_1^\top S_1
    +
    \int_0^1
    \left[
        p_t^\top\partial_tS_t
        +
        \frac12
        \Vert\nabla_GS_t\Vert_{p_t,\theta}^2
        \right]
    dt
    \nonumber \\
     & +
    \frac12
    \int_0^1
    \Vert\nabla_GS_t^\star\Vert_{p_t,\theta}^2
    dt.
    \label{eq:appendix_dam_gap_loss}
\end{align}
The last term is independent of $S$. Evaluating the same expression at $S=S^\star$ shows that
$
    \operatorname{AG}_{G,\theta}(S,S^\star)
    =
    \mathcal L_{\mathrm{DAM}}(S)
    -
    \mathcal L_{\mathrm{DAM}}(S^\star),
$
which proves Theorem~\ref{thm:discrete_action_matching}. This is the exact graph analogue of the fixed-path Action Matching identity of
\citet[Theorem~2.2]{neklyudov2023action}. For nonlinear mobilities, the remaining density dependence is that of
$\Vert\nabla_GS_t\Vert_{p_t,\theta}^2$ on
$\theta(p_i(t),p_j(t))$; Appendix~\ref{app:discrete_scores} shows how positive $1$-homogeneity reduces this dependence to neighboring density ratios.

\paragraph{Relation to the Maas--Rancati parameterization.}
Our notation follows the probability-coordinate graph formulation used in the main text. The original construction of \citet{maas2011gradient}, and the recent presentation of \citet{rancati2026learning}, use a different but closely related parameterization based on a reversible Markov kernel $K$ with invariant probability measure $\pi$. A probability measure is represented there by its density $\rho$ relative to $\pi$,
\begin{equation*}
    p_i
    =
    \pi_i\rho_i,
    \qquad
    \rho_i
    =
    \frac{p_i}{\pi_i}.
\end{equation*}
The corresponding symmetric edge conductance has the form
\begin{equation*}
    c_{ij}^{\mathrm{MR}}(\rho)
    =
    \pi_iK_{ij}
    m(\rho_i,\rho_j),
\end{equation*}
where reversibility,
$\pi_iK_{ij}=\pi_jK_{ji}$,
makes the conductance symmetric
\citep{maas2011gradient,rancati2026learning}. Expressed directly in our probability coordinates,
\begin{equation}
    c_{ij}^{\mathrm{MR}}(p)
    =
    \pi_iK_{ij}
    m
    \left(
    \frac{p_i}{\pi_i},
    \frac{p_j}{\pi_j}
    \right).
    \label{eq:maas_conductance_probability}
\end{equation}
Our construction instead writes the conductance as
\begin{equation*}
    c_{ij}(p)
    =
    \omega_{ij}
    \theta(p_i,p_j).
\end{equation*}
The two parameterizations should therefore be compared at the level of the edge conductances
$
    c_{ij}(p)
    \longleftrightarrow
    c_{ij}^{\mathrm{MR}}(p),
$
rather than by identifying $\rho_i$ with $p_i$. For a homogeneous symmetric $m$, reversibility gives $c_{ij}^{\mathrm{MR}}(p)=m(K_{ij}p_i,K_{ji}p_j)$. For the logarithmic mean and nonuniform $\pi$, its ratio to $m(p_i,p_j)$ generally depends on $p$; it cannot generally be absorbed into fixed $\omega_{ij}$. Uniform $\pi$ permits a constant rescaling, and the geometric mean has the special factorization $c_{ij}^{\mathrm{MR}}=K_{ij}\sqrt{\pi_i/\pi_j}\sqrt{p_ip_j}$. These are sufficient special cases, not an exhaustive classification. Under the conductance-level interpretation, the unique-gradient statement of
\citet[Proposition~3.26]{maas2011gradient}, restated as
\citet[Proposition~B.6]{rancati2026learning}, corresponds to the tangent representation proved above.

Table~\ref{tab:operator_dictionary} summarizes the continuous and discrete constructions used throughout the paper.

\begin{table*}[t]
    \centering
    \caption{Continuous--discrete dictionary for the dynamical transport structures underlying Action Matching.}
    \label{tab:operator_dictionary}
    \small
    \renewcommand{\arraystretch}{1.3}
    \begin{tabularx}{\textwidth}{
        >{\raggedright\arraybackslash}p{0.17\textwidth}
        >{\raggedright\arraybackslash}X
        >{\raggedright\arraybackslash}X
        }
        \toprule
        \textbf{Role}
         &
        \textbf{Continuous sample space}
         &
        \textbf{Finite graph}
        \\
        \midrule

        Probability state
         &
        $q\in\mathcal P_+(\Omega)$
         &
        $p\in\mathcal P_+(I)$
        \\

        Tangent vector
         &
        $\displaystyle
            \sigma\in T_q\mathcal P_+(\Omega),
            \quad
            \int_\Omega\sigma\,dx=0$
         &
        $\displaystyle
            \sigma\in T_p\mathcal P_+(I),
            \quad
            \mathbf1^\top\sigma=0$
        \\

        Potential
         &
        $\displaystyle
            [s]\in C^\infty(\Omega)/\mathbb R$
         &
        $\displaystyle
            [S]\in
            \mathbb R^N/
            \operatorname{span}\{\mathbf1\}$
        \\

        Gradient
         &
        $\displaystyle
            \nabla s$
         &
        $\displaystyle
            (\nabla_GS)_{ij}
            =
            \sqrt{\omega_{ij}}(S_i-S_j)$
        \\

        Potential-to-tangent map
         &
        $\displaystyle
            -\operatorname{div}(q\nabla s)$
         &
        $\displaystyle
            -\operatorname{div}_G
            \left(
            \theta(p)\nabla_GS
            \right)
            =
            L_\theta(p)S$
        \\

        Continuity equation
         &
        $\displaystyle
            \dot q_t
            =
            -\operatorname{div}(q_t\nabla s_t)$
         &
        $\displaystyle
            \dot p_t
            =
            L_\theta(p_t)S_t$
        \\

        Kinetic pairing
         &
        $\displaystyle
            \int_\Omega
            q\,
            g_x^\Omega(\nabla s,\nabla r)\,dx$
         &
        $\displaystyle
            \sum_{\{i,j\}\in E}
            \omega_{ij}\theta(p_i,p_j)
            (S_i-S_j)(R_i-R_j)$
        \\

        Tangent metric
         &
        $\displaystyle
            g_q^W(\sigma,\tau)$
         &
        $\displaystyle
            g_{p,\theta}^G(\sigma,\tau)
            =
            \sigma^\top
            L_\theta(p)^\dagger
            \tau$
        \\

        Kinetic energy
         &
        $\displaystyle
            \frac12
            g_q^W(\sigma,\sigma)$
         &
        $\displaystyle
            \frac12
            g_{p,\theta}^G(\sigma,\sigma)$
        \\

        Path action
         &
        $\displaystyle
            \int_0^1
            \mathcal K_W(q_t,\dot q_t)\,dt$
         &
        $\displaystyle
            \int_0^1
            \mathcal T_{G,\theta}(p_t,\dot p_t)\,dt$
        \\

        Invisible component
         &
        $\displaystyle
            \operatorname{div}(qw)=0$
         &
        $\displaystyle
            \operatorname{div}_G(\theta(p)u)=0$
        \\

        Gauge invariance
         &
        $\displaystyle
            s\mapsto s+c$
         &
        $\displaystyle
            S\mapsto S+c\mathbf1$
        \\

        \bottomrule
    \end{tabularx}
\end{table*}

The correspondence makes the common structure explicit. In both settings, a marginal tangent alone admits many velocity representations. The transport geometry projects this ambiguity onto a unique gradient representative, and its weighted squared norm defines the instantaneous kinetic energy. Continuous Action Matching exploits this structure because multiplication by $q_t$ turns the kinetic term directly into an expectation under the observed marginal. On a graph, the corresponding weight is the edge mobility $\theta(p_i,p_j)$, which depends on two unknown probability masses. The next appendix section develops the local density-ratio representation that removes this remaining obstacle.

\section{Discrete scores and local density ratios}
\label{app:discrete_scores}

The Discrete Action Matching objective depends on the current marginal through the edge mobility $\theta(p_i(t),p_j(t))$. Positive $1$-homogeneity reduces this dependence to ratios of neighboring probability masses, motivating the local density-ratio estimator introduced in Section~\ref{subsec:dam_ratio_estimation}. Such ratios have a longer history in normalization-free estimation on discrete spaces: ratio matching was introduced by \citet{hyvarinen2007some}, related constructions were placed within proper homogeneous local scoring rules by \citet{dawid2012proper}, and graph-based local Bregman formulations were developed by \citet{kanamori2017graph}. More recently, neighboring ratios were interpreted as discrete score-like quantities by \citet{meng2022concrete} and used directly in reverse-time discrete diffusion by \citet{lou2024discrete}.

In DAM, these ratios are auxiliary marginal information rather than the learned population velocity: they determine the mobility-weighted kinetic energy, whereas the action potential $S$ represents the canonical dynamics. This section provides the details omitted from the main text. We first review local density ratios and their relation to discrete scores, then derive their appearance in the DAM kinetic energy and prove consistency of the local softplus estimator. Finally, we discuss its relation to implicit score entropy and give the corresponding empirical training and Markov realization.

\subsection{Local density ratios and discrete scores}
\label{app:local_density_ratios}

Let $G=(I,E,\omega)$ be a connected undirected graph and let
$p\in\mathcal P_+(I)$. For every directed version of an edge
$\{i,j\}\in E$, define the \emph{local density ratio}
\begin{equation}
    r_p(i,j)
    :=
    \frac{p_j}{p_i}.
    \label{eq:local_density_ratio}
\end{equation}
Although the underlying graph is undirected, the ratio is directed and satisfies the reciprocity identity
$
    r_p(j,i)
    =
    r_p(i,j)^{-1}.
$
The collection
$
    r_p(i)
    :=
    \left(
    r_p(i,j)
    \right)_{j\in\mathcal N(i)}
$
therefore describes the masses of neighboring states relative to the mass at $i$. A basic property of \eqref{eq:local_density_ratio} is invariance to global normalization. If
$\widetilde p_i=cp_i$ for $c>0$, then
\begin{equation*}
    \frac{\widetilde p_j}{\widetilde p_i}
    =
    \frac{p_j}{p_i}.
\end{equation*}
This observation is central to ratio-matching and homogeneous local-scoring methods for unnormalized discrete models
\citep{hyvarinen2007some,dawid2012proper,kanamori2017graph}.

Conversely, on a connected graph the complete collection of neighboring ratios determines the normalized probability distribution. For any path
$
    i_0,i_1,\ldots,i_m
$
in $G$,
\begin{equation}
    \frac{p_{i_m}}{p_{i_0}}
    =
    \prod_{k=0}^{m-1}
    r_p(i_k,i_{k+1}).
    \label{eq:local_ratio_path_product}
\end{equation}
Connectedness therefore determines all masses relative to an arbitrary reference state, after which the constraint
$\sum_i p_i=1$
fixes their common scale. Ratios arising from a probability distribution also satisfy cycle consistency: for every cycle
$C=(i_0,\ldots,i_m=i_0)$,
\begin{equation*}
    \prod_{k=0}^{m-1}
    r_p(i_k,i_{k+1})
    =
    1.
\end{equation*}
Thus an arbitrary collection of positive directed edge values need not correspond to a globally defined probability distribution.

\paragraph{Ratio matching and local scoring rules.}
The use of neighboring probability ratios for statistical estimation predates their interpretation as discrete scores. \citet{hyvarinen2007some} introduced ratio matching as a discrete counterpart of score matching \citep{hyvarinen2005estimation}, initially for binary variables. The normalization constant cancels because the relevant conditional quantities can be expressed through ratios between probabilities of neighboring configurations.

A broader statistical interpretation was developed by
\citet{dawid2012proper}. A scoring rule assigns a loss
$\mathsf S(i,q)$ to a quoted distribution $q$ after observing
$i\in I$. It is \emph{proper} if its expected value under a distribution
$p$ is minimized by quoting $q=p$, and \emph{homogeneous} if
\begin{equation*}
    \mathsf S(i,cq)
    =
    \mathsf S(i,q),
    \qquad
    c>0.
\end{equation*}
Homogeneous scores can therefore be evaluated from an unnormalized positive mass function. \citet{dawid2012proper} characterize broad classes of proper local homogeneous scoring rules through clique-based homogeneous functions on an undirected neighborhood graph, with pseudo-likelihood and ratio matching among their examples.

\citet{kanamori2017graph} develop the corresponding viewpoint through composite local Bregman divergences. For a graph-defined neighborhood system, their local potentials depend on vectors of ratios between neighboring masses. For example, their graph extension of the ratio-matching score contains terms of the form
\begin{equation*}
    \mathsf S_{\mathrm{RM}}(i,q)
    =
    \sum_{j\in\mathcal N(i)}
    \frac{1}
    {\left(1+q_i/q_j\right)^2},
\end{equation*}
which is invariant to a global normalization of $q$. They further show that this graph ratio-matching construction is strictly proper when the neighborhood graph is connected
\citep{kanamori2017graph}. This is consistent with the elementary reconstruction property in
\eqref{eq:local_ratio_path_product}: a connected collection of exact local ratios determines the complete normalized distribution.

\paragraph{Concrete scores.}
\citet{meng2022concrete} reinterpret essentially the same local information as a discrete analogue of the continuous Stein score. For a neighborhood $\mathcal N(i)$, their \emph{Concrete score} has components
\begin{equation*}
    c_p(i,j)
    :=
    \frac{p_j-p_i}{p_i}
    =
    \frac{p_j}{p_i}-1,
    \qquad
    j\in\mathcal N(i).
\end{equation*}
Hence
\begin{equation*}
    c_p(i,j)
    =
    r_p(i,j)-1,
    \qquad
    r_p(i,j)
    =
    1+c_p(i,j),
\end{equation*}
so the Concrete score and the local density-ratio field contain exactly the same information.

This identity also explains the completeness result of
\citet[Theorem~1]{meng2022concrete}: if the neighborhood-induced graph is connected, equality of Concrete-score fields implies equality of the underlying normalized distributions. The same work relates the construction to the continuous score. For
\begin{equation*}
    \mathcal N_\delta(x)
    =
    \{x+\delta e_k\}_{k=1}^d,
\end{equation*}
\citet[Proposition~1]{meng2022concrete} show that
\begin{equation*}
    \frac{1}{\delta}
    c_p(x;\mathcal N_\delta)
    \longrightarrow
    \nabla_x\log p(x)
    \qquad
    \text{as }\delta\rightarrow0.
\end{equation*}
Concrete Score Matching then provides a sample-based objective for estimating this local score field \citep{meng2022concrete}.

\paragraph{Score entropy and SEDD.}
The unshifted ratio itself appears directly in the time reversal of a continuous-time Markov chain. If $Q_t(i,j)$ denotes the forward transition rate from $i$ to $j$ and $p(t)$ its marginal distribution, the reverse-time off-diagonal rates contain the factor
\begin{equation*}
    \overline Q_t(i,j)
    =
    \frac{p_j(t)}{p_i(t)}
    Q_t(j,i),
    \qquad
    i\neq j,
\end{equation*}
under the usual row-generator convention
\citep{kelly2011reversibility,anderson2012continuous,sun2023scorebased,lou2024discrete}. Thus estimating the reverse dynamics of a prescribed forward diffusion requires the time-dependent ratios
\begin{equation*}
    r_t(i,j)
    =
    \frac{p_j(t)}{p_i(t)}.
\end{equation*}

Motivated by this observation, \citet{lou2024discrete} parameterize these positive ratios directly and introduce \emph{score entropy}. For a positive prediction $s(i)_j$, target ratio
$r=p_j/p_i$, and positive weight $w_{ij}$, the corresponding scalar contribution is
\begin{equation*}
    w_{ij}
    \left[
        s(i)_j
        -
        r\log s(i)_j
        +
        r(\log r-1)
        \right].
\end{equation*}
The excess loss is
$
    r\log\frac{r}{s}
    -
    r+s,
$
which is the Bregman divergence generated by the negative-entropy potential
$
    F(a)
    =
    a\log a-a.
$
Consequently, its unique optimum is $s=r$. In particular,
\citet[Proposition~3.2]{lou2024discrete} establish consistency of score entropy for the density-ratio target under full support and positive weights.

The population score-entropy objective itself contains the unknown ratios. \citet[Proposition~3.3]{lou2024discrete} eliminate them to obtain \emph{implicit score entropy},
\begin{equation}
    \mathcal L_{\mathrm{ISE}}(s;p)
    =
    \mathbb E_{i\sim p}
    \left[
        \sum_{j\neq i}
        \left(
        w_{ij}s(i)_j
        -
        w_{ji}\log s(j)_i
        \right)
        \right],
    \label{eq:implicit_score_entropy_background}
\end{equation}
up to a constant independent of the model. For the structured noising processes used in SEDD, \citet[Theorem~3.4]{lou2024discrete} instead derive a denoising form based on the known forward transition kernel, avoiding the costly evaluation of the implicit objective over all alternative states.

These constructions establish that neighboring ratios are natural normalization-free statistics of a discrete marginal distribution. In SEDD, however, the ratios are learned because they directly determine the reverse-time generator. In Discrete Action Matching they play a different role: they determine the density-dependent \emph{geometry} in which the canonical action is learned. The score-entropy/discrete-geometry connection is also discussed by \citet[Appendix~G]{rancati2026learning}; the derivation below explains its use for the DAM kinetic term without claiming priority for the general connection.

\subsection{Density ratios in Discrete Action Matching}
\label{app:ratios_in_dam}

We now derive the appearance of local density ratios in Discrete Action Matching directly from the graph kinetic energy. Here no ratio-valued score is imposed as a dynamical parameterization, unlike in score-based discrete diffusion. The ratios arise solely because the mobility $\theta$ is positively $1$-homogeneous.

\begin{corollary}[Sample-local kinetic energy]
    \label{cor:dam_sample_local_kinetic}
    Under the assumptions of Theorem~\ref{thm:discrete_action_matching}, assume additionally that $\theta$ is positively $1$-homogeneous. Then the kinetic energy admits the sample-local representation \eqref{eq:dam_sample_local_kinetic}.
\end{corollary}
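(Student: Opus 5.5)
The plan is to start from the edge-sum form of the kinetic energy in \eqref{eq:background_discrete_kinetic_energy} and rewrite it twice: first as a symmetric sum over directed neighbor pairs, then as an expectation over nodes under $p$ after factoring the mobility with homogeneity.

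\textbf{Step 1: factor the mobility.} For $p\in\mathcal P_+(I)$ and any edge $\{i,j\}\in E$, apply positive $1$-homogeneity with $\lambda=p_i>0$:
\[
\theta(p_i,p_j)=p_i\,\theta(1,p_j/p_i)=p_i\,\kappa_\theta\!\left(\log\frac{p_j}{p_i}\right),
\qquad \kappa_\theta(u):=\theta(1,e^u).
\]
Strict positivity of $p$ makes the logarithm well defined. Symmetry of $\theta$ gives the same value when the roles of $i$ and $j$ are exchanged.

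\textbf{Step 2: pass from unordered edges to directed pairs.} Each unordered edge appears twice in the double sum $\sum_{i\in I}\sum_{j\in\mathcal N(i)}$. The summand $\omega_{ij}\theta(p_i,p_j)(S_i-S_j)^2$ is symmetric in $(i,j)$ because $\omega$ and $\theta$ are symmetric and the difference is squared. Hence
\[
\mathcal K_{G,\theta}(p,S)=\frac14\sum_{i\in I}\sum_{j\in\mathcal N(i)}\omega_{ij}\theta(p_i,p_j)(S_i-S_j)^2 .
\]
This is the same factor-$\tfrac12$ bookkeeping used in \eqref{eq:discrete_integration_by_parts_general}, and it is where the constant $1/4$ comes from.

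\textbf{Step 3: form the expectation.} Substitute the factorization from Step 1 into the inner summand. This pulls out the factor $p_i$:
\[
\mathcal K_{G,\theta}(p,S)=\frac14\sum_{i\in I}p_i\sum_{j\in\mathcal N(i)}\omega_{ij}\kappa_\theta\!\left(\log\frac{p_j}{p_i}\right)(S_i-S_j)^2 .
\]
The outer sum is exactly $\mathbb E_{i\sim p}[\cdot]$, which gives \eqref{eq:dam_sample_local_kinetic}.

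The only delicate point is Step 2. One must check that the non-symmetric-looking factorization $p_i\kappa_\theta(\log(p_j/p_i))$ is applied inside an already symmetric directed sum. Otherwise one could mistakenly attribute each edge to only one endpoint and obtain $1/2$ instead of $1/4$. Symmetry of $\theta$, through $p_i\kappa_\theta(\log(p_j/p_i))=p_j\kappa_\theta(\log(p_i/p_j))$, guarantees that both directed copies equal the edge mobility, so the count is consistent. No time regularity is needed, since the identity holds pointwise in $t$ for each $p(t)\in\mathcal P_+(I)$.
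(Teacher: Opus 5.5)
Your proof is correct and matches the paper's argument. You factor $\theta(p_i,p_j)=p_i\kappa_\theta(\log(p_j/p_i))$ by homogeneity, pass from the unordered edge sum to the directed neighborhood sum with the factor $\tfrac14$, and read the $p_i$-weighted outer sum as $\mathbb E_{i\sim p}$. The only difference is the order: the paper factors first and then checks the $\tfrac14$ by pairing the two orientations $p_i\kappa_\theta(\log(p_j/p_i))$ and $p_j\kappa_\theta(\log(p_i/p_j))$ of each edge, whereas you symmetrize first, so your Step~3 substitution is a termwise identity and the symmetry check you flag as delicate is not actually needed.
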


Recall from \eqref{eq:discrete_kinetic_energy} that
\begin{equation}
    \mathcal K_{G,\theta}(p,S)
    =
    \frac12
    \sum_{\{i,j\}\in E}
    \omega_{ij}
    \theta(p_i,p_j)
    (S_i-S_j)^2.
    \label{eq:dam_kinetic_ratio_appendix}
\end{equation}
Define the \emph{log-ratio mobility profile}
$
    \kappa_\theta(u)
    :=
    \theta(1,e^u),
    \,
    u\in\mathbb R.
$
For $p_i,p_j>0$, positive $1$-homogeneity gives
\begin{align}
    \theta(p_i,p_j)
    =
    p_i
    \theta
    \left(
    1,\frac{p_j}{p_i}
    \right)
    =
    p_i
    \kappa_\theta
    \left(
    \log\frac{p_j}{p_i}
    \right).
    \label{eq:mobility_log_ratio_factorization}
\end{align}
Thus the dependence of the edge mobility on two unknown probability masses separates into the source probability $p_i$ and a function of one neighboring log-ratio.

Using \eqref{eq:mobility_log_ratio_factorization}, the undirected edge sum in
\eqref{eq:dam_kinetic_ratio_appendix} can be rewritten as a directed neighborhood sum:
\begin{align}
    \mathcal K_{G,\theta}(p,S)
     & =
    \frac14
    \sum_{i\in I}
    p_i
    \sum_{j\in\mathcal N(i)}
    \omega_{ij}
    \kappa_\theta
    \left(
    \log\frac{p_j}{p_i}
    \right)
    (S_i-S_j)^2
    \nonumber \\
     & =
    \frac14
    \mathbb E_{i\sim p}
    \left[
        \sum_{j\in\mathcal N(i)}
        \omega_{ij}
        \kappa_\theta
        \left(
        \log\frac{p_j}{p_i}
        \right)
        (S_i-S_j)^2
        \right].
    \label{eq:kinetic_log_ratio_expectation}
\end{align}
To verify the factor $1/4$, note that the directed sum contains both orientations of every unoriented edge. For $\{i,j\}\in E$, the two contributions are
\begin{align}
    \frac14
    \omega_{ij}
    p_i
    \kappa_\theta
    \left(
    \log\frac{p_j}{p_i}
    \right)
    (S_i-S_j)^2
    +
    \frac14
    \omega_{ij}
    p_j
    \kappa_\theta
    \left(
    \log\frac{p_i}{p_j}
    \right)
    (S_i-S_j)^2.
\end{align}
Each source-weighted mobility equals
$\theta(p_i,p_j)$ by homogeneity and symmetry, so their sum is precisely
$
    \frac12
    \omega_{ij}
    \theta(p_i,p_j)
    (S_i-S_j)^2.
$
This proves Corollary~\ref{cor:dam_sample_local_kinetic}.

Equation~\eqref{eq:kinetic_log_ratio_expectation} is the key statistical closure of Discrete Action Matching. Sampling $X\sim p$ accounts automatically for the source probability $p_X$. The remaining marginal information is only
$
    \left\{
    \log\frac{p_j}{p_X}
    \right\}_{j\in\mathcal N(X)},
$
rather than the complete probability vector $p$.

\paragraph{Arithmetic mobility.}
For $\theta_{\mathrm{ari}}(a,b)=(a+b)/2$, symmetry of the edge sum gives
\begin{align}
    \mathcal K_{G,\mathrm{ari}}(p,S)
    =
    \frac14\sum_{\{i,j\}\in E}
    \omega_{ij}(p_i+p_j)(S_i-S_j)^2
    =
    \frac14\mathbb E_{i\sim p}
    \left[
        \sum_{j\in\mathcal N(i)}
        \omega_{ij}(S_i-S_j)^2
    \right].
    \label{eq:arithmetic_ratio_free}
\end{align}
Each unordered edge contributes once for each sampled endpoint in the expectation, recovering its weight $p_i+p_j$. Thus the arithmetic action admits an estimator that requires marginal samples but no density ratios. This is an alternative to evaluating the general ratio-based expression.

The Markov realization in \eqref{eq:dam_generator_offdiag} still uses density ratios, since $\kappa_{\mathrm{ari}}(u)=(1+e^u)/2$. At exact ratios its off-diagonal rates are
\[
    Q_t(i,j)
    =
    \frac{\omega_{ij}}2
    \left(1+\frac{p_j(t)}{p_i(t)}\right)
    \psi_\varepsilon\!\left(S_j(t)-S_i(t)\right).
\]
The simplification of the action therefore does not remove the density dependence of this rate construction.

\paragraph{A scalar log-density representation.}
The required log-ratios have a natural globally consistent representation. Define
$
    U^\star(i)
    :=
    \log p_i+c,
$
where $c\in\mathbb R$ is arbitrary. Then
$
    U^\star(j)-U^\star(i)
    =
    \log\frac{p_j}{p_i}.
$
For a time-dependent marginal path,
\begin{equation}
    U^\star(t,i)
    =
    \log p_i(t)+c(t).
    \label{eq:time_log_density_potential}
\end{equation}
Only differences of $U^\star$ enter the method, so the time-dependent gauge $c(t)$ is immaterial.

This scalar representation automatically enforces the structural constraints of a valid ratio field. Defining
\begin{equation*}
    r_U(i,j)
    :=
    \exp
    \left(
    U(j)-U(i)
    \right),
\end{equation*}
we have identically
\begin{equation*}
    r_U(j,i)
    =
    r_U(i,j)^{-1}
\end{equation*}
and
\begin{equation*}
    \prod_{(i,j)\in C}
    r_U(i,j)
    =
    1
\end{equation*}
for every graph cycle $C$. Thus, unlike a free directed-edge parameterization, a scalar field cannot produce mutually inconsistent ratios around a cycle.

\paragraph{Local softplus ratio estimation.}
We estimate the field \eqref{eq:time_log_density_potential} directly from samples. For a fixed full-support marginal $p$, define
\begin{equation}
    \mathcal R_p(U)
    :=
    \mathbb E_{i\sim p}
    \left[
        \sum_{j\in\mathcal N(i)}
        \omega_{ij}
        \operatorname{softplus}
        \left(
        U_j-U_i
        \right)
        \right],
    \label{eq:softplus_ratio_population}
\end{equation}
where
$
    \operatorname{softplus}(z)
    :=
    \log(1+e^z).
$
Like the local scoring objectives reviewed above, \eqref{eq:softplus_ratio_population} depends on $p$ only through the outer expectation and therefore requires neither the individual masses $p_i$ nor their normalization constant during empirical optimization.

The population target can be characterized edge by edge.

\begin{proposition}[Consistency of the local softplus ratio loss]
    \label{prop:softplus_ratio_consistency}
    Let $G$ be connected and let $p\in\mathcal P_+(I)$. Then
    \begin{equation}
        \operatorname*{argmin}_{U\in\mathbb R^N}
        \mathcal R_p(U)
        =
        \left\{
        \log p+c\mathbf1:
        c\in\mathbb R
        \right\}.
        \label{eq:softplus_ratio_argmin}
    \end{equation}
    In particular, every minimizer satisfies
    \begin{equation*}
        U_j-U_i
        =
        \log\frac{p_j}{p_i},
        \qquad
        \{i,j\}\in E.
    \end{equation*}
\end{proposition}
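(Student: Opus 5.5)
The plan is to rewrite $\mathcal R_p$ as a sum of independent one-dimensional convex problems, one per unoriented edge, each depending on $U$ only through the single difference $d_{ij}:=U_j-U_i$.

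\textbf{Step 1: regroup by edges.} Expanding the expectation gives
\begin{equation*}
    \mathcal R_p(U)=\sum_{i\in I}p_i\sum_{j\in\mathcal N(i)}\omega_{ij}\operatorname{softplus}(U_j-U_i).
\end{equation*}
As in the verification of the factor $1/4$ in Corollary~\ref{cor:dam_sample_local_kinetic}, each unoriented edge $\{i,j\}\in E$ appears twice in this directed sum. Using symmetry $\omega_{ij}=\omega_{ji}$, we obtain
\begin{equation*}
    \mathcal R_p(U)=\sum_{\{i,j\}\in E}\omega_{ij}\,f_{ij}(U_j-U_i),
    \qquad
    f_{ij}(d):=p_i\operatorname{softplus}(d)+p_j\operatorname{softplus}(-d).
\end{equation*}

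\textbf{Step 2: minimize each edge term.} Since $p_i,p_j>0$ and softplus is strictly convex, $f_{ij}$ is strictly convex on $\mathbb R$. It is also coercive, because it grows linearly as $d\to\pm\infty$. Write $\sigma(z)=1/(1+e^{-z})$ for the logistic function, so that $\operatorname{softplus}'=\sigma$. The first-order condition reads
\begin{equation*}
    p_i\sigma(d)=p_j\sigma(-d).
\end{equation*}
Because $\sigma(d)/\sigma(-d)=e^{d}$, this is equivalent to $e^{d}=p_j/p_i$. Hence $f_{ij}$ has the unique minimizer $d^\star_{ij}=\log(p_j/p_i)$.

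\textbf{Step 3: pass from edgewise to global minimizers.} This is the only step that needs care, since the edge differences $U_j-U_i$ are not free variables: they are coupled through $U$, and around cycles they must sum to zero. I handle this by exhibiting a single $U$ that attains every edge minimum at once.

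Take $U=\log p$. Then $U_j-U_i=d^\star_{ij}$ on every edge, so this choice attains the lower bound
\begin{equation*}
    \mathcal R_p(U)\ \ge\ \sum_{\{i,j\}\in E}\omega_{ij}\min_{d\in\mathbb R} f_{ij}(d).
\end{equation*}
Consequently the minimum value of $\mathcal R_p$ equals this bound. Moreover, $U$ is a minimizer if and only if every edge term attains its own minimum, because all weights $\omega_{ij}$ are positive and each term is bounded below by its minimum. By the uniqueness in Step~2, this holds if and only if
\begin{equation*}
    U_j-U_i=\log\frac{p_j}{p_i}\qquad\text{for all }\{i,j\}\in E.
\end{equation*}

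\textbf{Step 4: use connectedness.} Suppose $U$ satisfies the edge conditions of Step~3, and set $V:=U-\log p$. Then $V_i=V_j$ on every edge. Since $G$ is connected, $V$ is constant along paths, so $V=c\mathbf 1$ for some $c\in\mathbb R$; this is the same argument as for $\ker L_\theta(p)=\operatorname{span}\{\mathbf1\}$.

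Conversely, every $U=\log p+c\mathbf 1$ satisfies the edge conditions and is therefore a minimizer. This gives \eqref{eq:softplus_ratio_argmin}, and the stated ratio identity follows from Step~3.
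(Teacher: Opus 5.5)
Your proposal is correct and matches the paper's proof: both split $\mathcal R_p$ into one strictly convex term per unoriented edge in $d=U_j-U_i$, with unique minimizer $\log(p_j/p_i)$. Both then note that $U=\log p$ attains every edge minimum at once, which forces these edge differences for any minimizer, and finish with connectedness. The only difference is cosmetic: you write the first-order condition via the logistic function, while the paper differentiates $\ell_{ij}$ explicitly and checks $\ell_{ij}''>0$.
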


\begin{proof}
    Consider an unoriented edge $\{i,j\}\in E$ and write
    $
        \delta_{ij}
        :=
        U_j-U_i.
    $
    The two directed occurrences of this edge in
    \eqref{eq:softplus_ratio_population} contribute
    \begin{equation*}
        \ell_{ij}(\delta_{ij})
        =
        \omega_{ij}
        \left[
            p_i
            \operatorname{softplus}(\delta_{ij})
            +
            p_j
            \operatorname{softplus}(-\delta_{ij})
            \right].
    \end{equation*}
    Differentiation gives
    \begin{equation*}
        \ell_{ij}'(\delta)
        =
        \omega_{ij}
        \frac{
            p_i e^\delta-p_j
        }{
            1+e^\delta
        }.
    \end{equation*}
    Therefore
    \begin{equation*}
        \ell_{ij}'(\delta)=0
        \quad\Longleftrightarrow\quad
        \delta
        =
        \log\frac{p_j}{p_i}.
    \end{equation*}
    Moreover,
    \begin{equation*}
        \ell_{ij}''(\delta)
        =
        \omega_{ij}
        \frac{
            (p_i+p_j)e^\delta
        }{
            (1+e^\delta)^2
        }
        >
        0,
    \end{equation*}
    so this stationary point is the unique minimum of the edge contribution.

    The collection
    $
        U_i^\star
        =
        \log p_i
    $
    simultaneously realizes the unique minimizing difference on every edge. Hence any global minimizer must have
    $
        U_j-U_i
        =
        U_j^\star-U_i^\star
    $
    for every edge. Since $G$ is connected, $U-U^\star$ is constant on $I$, proving
    \eqref{eq:softplus_ratio_argmin}.
\end{proof}

Proposition~\ref{prop:softplus_ratio_consistency} explains why we parameterize the ratio estimator through a scalar field $U_\phi(t,i)$ rather than independent directed predictions. At the population optimum,
\begin{equation*}
    \exp
    \left(
    U_\phi(t,j)-U_\phi(t,i)
    \right)
    =
    \frac{p_j(t)}{p_i(t)},
\end{equation*}
while positivity, reciprocity, and cycle consistency hold automatically for arbitrary model parameters.

\paragraph{Relation to implicit score entropy.}
The local softplus loss is related to, but distinct from, implicit score entropy. To make the comparison explicit, restrict
\eqref{eq:implicit_score_entropy_background} to graph neighbors, take symmetric weights
$
    w_{ij}
    =
    w_{ji}
    =
    \omega_{ij},
$
and use the coherent positive-score parameterization
$
    s_U(i)_j
    :=
    \exp(U_j-U_i).
$
Then
$
    -\log s_U(j)_i
    =
    U_j-U_i,
$
and the implicit score-entropy objective becomes
\begin{equation*}
    \mathcal L_{\mathrm{ISE}}(U;p)
    =
    \mathbb E_{i\sim p}
    \left[
        \sum_{j\in\mathcal N(i)}
        \omega_{ij}
        \left(
        e^{U_j-U_i}
        +
        U_j-U_i
        \right)
        \right].
\end{equation*}
For one edge and
$\delta=U_j-U_i$, its two orientations contribute
\begin{equation*}
    \omega_{ij}
    \left[
        p_i
        \left(
        e^\delta+\delta
        \right)
        +
        p_j
        \left(
        e^{-\delta}-\delta
        \right)
        \right].
\end{equation*}
The derivative vanishes uniquely at
$
    \delta
    =
    \log\frac{p_j}{p_i},
$
and the edge contribution is strictly convex. Consequently,
\begin{equation*}
    \operatorname*{argmin}_U
    \mathcal L_{\mathrm{ISE}}(U;p)
    =
    \operatorname*{argmin}_U
    \mathcal R_p(U)
    =
    \left\{
    \log p+c\mathbf1
    \right\}.
\end{equation*}

Thus the two objectives are \emph{target-equivalent} under the scalar log-density parameterization, but they are not the same estimator. Their nonlinear terms are respectively $e^{U_j-U_i}$ and $\operatorname{softplus}(U_j-U_i)$, and hence their gradients and finite-sample behavior differ. In particular, implicit score entropy inherits an exponential dependence on the predicted log-ratio, whereas the derivative of the softplus term is logistic and bounded. Our method uses
\eqref{eq:softplus_ratio_population}; the relation to
\citet{lou2024discrete} is therefore a connection through the common population density-ratio target, rather than an identity of training losses.

With the local log-density field estimated, the kinetic energy becomes sample-computable through
\begin{equation*}
    \mathcal K_{G,\theta}(p,S)
    =
    \frac14
    \mathbb E_{i\sim p}
    \left[
        \sum_{j\in\mathcal N(i)}
        \omega_{ij}
        \kappa_\theta
        \left(
        U^\star_j-U^\star_i
        \right)
        (S_i-S_j)^2
        \right].
\end{equation*}
We next use this representation to construct the empirical two-stage objective and the corresponding Markov dynamics.

    \subsection{Empirical training and Markov realization}
    \label{app:dam_empirical_and_sampling}

    \paragraph{Why declare an interpolant?}
    The population action integrates expectations over all $t\in[0,1]$. Finite snapshots do not supply samples at arbitrary intermediate times. Equation~\eqref{eq:declared_interpolant} declares the distribution sampled at such times. It interpolates distributions, not matched individual trajectories. Conditional on the data, sample $T$ uniformly, choose the reference with probability $\delta$, otherwise choose the left or right snapshot with probabilities $1-h(u)$ and $h(u)$, and sample a state. At endpoints use the corresponding empirical/reference mixture. The resulting state law is exactly $\widetilde p(T)$, with $\widetilde p_i(t)\geq\delta/N$.

    The original AM algorithm assumes intermediate-time sample access \citep[Algorithm~1]{neklyudov2023action}. Merely replacing its time integral by derivative evaluations at finitely many observed knots changes the optimization problem. In particular, a candidate time-only gauge $c(t)\mathbf1$ changes that fixed-knot loss by
    $
        c(0)-c(1)+\sum_k\alpha_k c'(t_k).
    $
    For $A>0$, choose
    $
        c_A(t)=A\int_0^t\prod_k(s-t_k)^2\,ds.
    $
    Every sampled derivative vanishes while $c_A(1)>0$, so the unrestricted fixed-knot objective is unbounded below. All spatial differences, actions and rates are unchanged. Spatial zero-mean gauge fixing removes this example but not generally time-dependent spatial perturbations that vanish at the knots. This counterexample does not assert divergence of every restricted neural optimizer, nor that interpolation is the only possible well-posed temporal formulation. DICE studies related finite-time AM difficulties \citep[Section~6]{blickhan2025dice}. We use the continuous-time estimator for the declared path, whose expected boundary and derivative terms cancel under a smooth time gauge; an individual Monte Carlo batch need not be gauge invariant.

    \paragraph{Regularity of the practical path.}
    A globally $C^2$ positive path satisfies Theorem~\ref{thm:discrete_action_matching}. The cubic smoothstep construction is $C^1$ and piecewise smooth, generally not globally $C^2$. On each interval the theorem's calculation applies. For continuous, piecewise $C^1$ candidates, integration-by-parts boundary terms telescope because the path and candidate are continuous. The zero-mean canonical potential is continuous and piecewise $C^1$ for smoothstep, since the path tangent vanishes on both sides of each knot. Thus the identity extends in this piecewise class. A linear probability interpolant is continuous but can have tangent jumps; its canonical potential may jump. The identity still holds with the target kinetic energy as a finite additive constant, but a continuous network need not represent that discontinuous target exactly. Endpoint matching and regularity must be checked for the actual schedule, not inferred from a continuous-time network input.

    \paragraph{Stage 1 and its guarantee.}
    At a fixed positive marginal the local softplus loss in \eqref{eq:softplus_ratio_population} has the minimizers in Proposition~\ref{prop:softplus_ratio_consistency}. For a snapshot-only population loss $\sum_k w_k\mathcal R_{p(t_k)}(U(t_k))$, assume $w_k>0$, simultaneous realizability at all knots, and global population minimization; the conclusion then holds only at those knots:
    \begin{equation*}
        U_{\phi^\star}(t_k,j)-U_{\phi^\star}(t_k,i)=\log\frac{p_j(t_k)}{p_i(t_k)}.
    \end{equation*}
    The implemented objective instead estimates
    $
        \mathcal R_{\widetilde p}(\phi)=\int_0^1\mathcal R_{\widetilde p(t)}(U_\phi(t))\,dt.
    $
    Its unrestricted optimum recovers $\log\widetilde p(t)$ modulo a gauge almost everywhere, with the same realizability and exact-minimization qualifications for a model class. This is not a finite-data recovery theorem for $p(t)$. Zero-count empirical states make unrestricted unsmoothed edge-ratio estimation problematic; reference mixing supplies positive support, at the cost of changing the target. Atlas pseudocounts, empirical resampling, and training reference mixing are separate operations (Appendix~\ref{app:gastrulation_experiment}).

    \paragraph{Frozen geometry and conditional unbiasedness.}
    For a fixed $U$ and any positive path $p$, define
    \begin{align}
        \widetilde c_{ij}^{\,U}(p)    & =\frac{\omega_{ij}}2\left[p_i\kappa_\theta(U_j-U_i)+p_j\kappa_\theta(U_i-U_j)\right],\label{eq:perturbed_conductance} \\
        \widetilde{\mathcal K}_U(p,S) & =\frac12\sum_{\{i,j\}\in E}\widetilde c_{ij}^{\,U}(p)(S_i-S_j)^2.\label{eq:perturbed_kinetic_energy}
    \end{align}
    These positive symmetric conductances describe the training energy. They alone do not ensure the same sampling current. Indeed, set $a=\omega_{ij}p_i\kappa_\theta(U_j-U_i)$, $b=\omega_{ij}p_j\kappa_\theta(U_i-U_j)$ and $z=S_j-S_i$. The source-first training current is $(a+b)z/2$, whereas the rate construction yields
    \begin{equation*}
        a\psi_\varepsilon(z)-b\psi_\varepsilon(-z)=\frac{a+b}{2}z+\frac{a-b}{2}\big[\psi_\varepsilon(z)+\psi_\varepsilon(-z)\big].
    \end{equation*}
    At $z=0$ this equals $(a-b)\varepsilon\log2$, which need not vanish. At exact ratios for $p$, $a=b=\omega_{ij}\theta(p_i,p_j)$ and
    $
        \widetilde c_{ij}^{\,U^\star}(p)=\omega_{ij}\theta(p_i,p_j).
    $
    For fixed $S$ and a frozen fitted $U$, fresh draws in \eqref{eq:dam_empirical_action_loss} satisfy
    \begin{align}
        \mathbb E[\widehat{\mathcal L}_{\mathrm{DAM}}\mid\mathcal D,U]
         & =\widetilde p(0)^\top S(0)-\widetilde p(1)^\top S(1)\nonumber                                                   \\
         & \quad+\int_0^1\left[\widetilde p(t)^\top\partial_tS(t)+\widetilde{\mathcal K}_U(\widetilde p(t),S(t))\right]dt.
        \label{eq:empirical_dam_U_objective}
    \end{align}
    To verify this, average endpoint samples, then condition on $T=t$ and average the sampled state. Pairing the two orientations of each edge gives \eqref{eq:perturbed_kinetic_energy}; finally average uniform time. At exact ratios for this same $\widetilde p$ the expression is its chosen-mobility action loss. This conditional estimator identity says nothing about unbiasedness of an optimized network or identification of the unknown intermediate population path.

    A neighborhood sum $H_f(i)=\sum_j\omega_{ij}f(i,j)$ may be estimated with $J\sim q(\cdot\mid i)$, where $q(j\mid i)>0$ on every neighbor:
    \begin{equation*}
        \widehat H_f(i,J)=\frac{\omega_{iJ}}{q(J\mid i)}f(i,J),\qquad \mathbb E[\widehat H_f(i,J)\mid i]=H_f(i).
    \end{equation*}
    Independent repetitions reduce variance. The proposal is known and is not a density-ratio estimate. Freezing $U$ keeps the effective conductances fixed during action optimization, so $S$ is fitted under the geometry determined by the estimated density ratios.

    \paragraph{Markov realization and its law.}
    For finite continuous fields abbreviate $Q_t=Q_t^{S,U}$, with
    \begin{equation*}
        Q_t(i,j)=\omega_{ij}\kappa_\theta(U_j-U_i)\psi_\varepsilon(S_j-S_i)\quad(j\in\mathcal N(i)),\qquad Q_t(i,i)=-\sum_{j\ne i}Q_t(i,j).
    \end{equation*}
    Nonneighbor rates vanish. Nonnegative finite off-diagonals and zero row sums give a valid generator; continuity makes it bounded on the compact time interval. Both $S$ and $U$ may be shifted by time-dependent spatial constants without changing rates. Let $Y_t$ denote the constructed chain and $\mu(t)$ its column-vector law. Then
    $
        \dot\mu(t)=Q_t^\top\mu(t).
    $
    If $U_j-U_i=\log(p_j/p_i)$, homogeneity and the split identity give
    \begin{align}
        p_iQ_t(i,j)-p_jQ_t(j,i) & =\omega_{ij}\theta(p_i,p_j)(S_j-S_i),\label{eq:appendix_current_identity}                            \\
        (Q_t^\top p)_i                & =\sum_j\omega_{ij}\theta(p_i,p_j)(S_i-S_j)=[L_\theta(p)S]_i.\label{eq:appendix_generator_continuity}
    \end{align}
    If additionally $S=S^\star$, the prescribed path satisfies the same forward equation as $\mu$. Matching initial laws and uniqueness yield $\mu=p$. This applies intervalwise for the piecewise-smooth construction. It does not equate a target path with a simulated law under approximate fits. The algebraic current identity also holds relative to $\widehat p=\operatorname{softmax}(U)$, but that vector is not automatically $\mu$.

    The smooth split obeys $\psi_\varepsilon(z)=\max(z,0)+\varepsilon\log(1+e^{-|z|/\varepsilon})$. At exact ratios the added symmetric traffic satisfies detailed balance with $p$ and leaves its instantaneous current unchanged, in the spirit of marginal-preserving generator additions \citep{campbell2024generative}. This does not transfer an exact-current property to imperfect ratios.

    \paragraph{Inference and trajectory simulation.}
    For marginal evaluation, choose an initial distribution and time grid; freeze $Q$ at each substep's evaluation time and propagate $\mu_{\ell+1}=\exp(\Delta t_\ell Q_{t_\ell}^\top)\mu_\ell$. The experiment-specific grids and refinement checks are in Appendix~\ref{app:experiments}. This deterministic calculation is different from drawing trajectories.

    For an optional trajectory with a piecewise-constant generator, at state $i$ and time $t$ draw $\tau\sim\operatorname{Exp}(\lambda_i)$, where $\lambda_i=-Q_t(i,i)$. If $\lambda_i=0$, remain at $i$ until the next rate-change boundary. If $t+\tau$ reaches or crosses that boundary, advance to the boundary without a jump and restart using the new rates. Otherwise choose the first-jump destination with probability $Q_t(i,j)/\lambda_i$, advance to $t+\tau$, and repeat \citep{anderson2012continuous}. This is exact for the frozen generator and a time-discretization approximation for continuously varying rates.

\section{Extended discussions}
\label{app:extended_discussions}

\subsection{Extended related work}
\label{app:extended_related_work}

This section provides additional context for the methods summarized in Section~\ref{sec:related_work}. We first discuss extensions of Action Matching and discrete Wasserstein gradient-flow methods, and then distinguish DAM from discrete diffusion, score-based, and flow-based generative models.

\paragraph{Extensions of Action Matching.}
Beyond the fixed-path formulation of \citet{neklyudov2023action}, Wasserstein Lagrangian Flows additionally optimize the probability path subject to observed marginal constraints \citep{neklyudov2024wasserstein}, while \citet{berman2024parametric} consider parameter-dependent reduced dynamics. Wasserstein Hamiltonian formulations describe kinetic evolution directly on probability space \citep{chow2020wasserstein}, and Wasserstein Lagrangian Mechanics extends marginal-based learning to second-order population dynamics \citep{guan2026call}. Connections between Action Matching and quadratic optimal transport under optimal vector-field parameterizations are studied by \citet{kornilov2025equivalence}. These developments concern continuous state spaces, whereas DAM focuses on the additional density-dependent geometry that appears on finite graphs.

\paragraph{Discrete Wasserstein geometry.}
The discrete transport geometry used by DAM originates from constructions that endow finite-state probability spaces with mobility-dependent dynamical transport metrics. \citet{maas2011gradient} constructs a metric under which the heat equation of a reversible Markov chain is the gradient flow of relative entropy, while \citet{chow2012fokker} derive related discrete Fokker--Planck gradient flows on weighted graphs. \citet{mielke2011gradient} studies generalized finite-state gradient structures. Subsequent developments include entropy dissipation in \citet{chow2018entropy}, Riemannian formulations in \citet{li2018natural,li2022transport}, and the study of geodesics and their support in \citet{gangbo2019geodesics,erbar2019geometry}. These results provide the graph gradient, continuity equation, and kinetic-energy metric underlying DAM.

Discrete Wasserstein geometry has also been used computationally for objectives different from population-dynamics inference. \citet{diazgarcia2021population} connect Wasserstein gradient flows with population-game dynamics, while \citet{sun2023discrete} use discrete gradient flows to construct samplers for prescribed target distributions. In these settings the driving objective is given, whereas DAM infers a canonical velocity from observed temporal marginals.

\paragraph{Inverse Wasserstein gradient flows.}
A related inverse problem assumes that the observed population evolution is generated by the Wasserstein gradient flow of an unknown energy functional. In continuous spaces, \citet{bunne2022proximal} formulate this problem through repeated JKO steps and Brenier maps \citep{brenier1991polar}. \citet{terpin2024learning} derive a first-order alternative based on precomputed optimal-transport information, while \citet{persiianov2026learning} remove this preprocessing through a saddle-point formulation over the energy and transport maps.

\citet{rancati2026learning} extend the first-order JKO approach to finite graphs using discrete Wasserstein geometry. Their construction recovers an autonomous free-energy functional after computing graph transport geodesics between observed marginals. DAM instead treats the marginal evolution itself as prescribed and recovers its canonical minimum-action velocity, which may depend explicitly on time. The two methods therefore impose different structural assumptions: inverse JKO learning seeks one energy functional explaining the complete evolution, whereas DAM selects the minimum-action tangent compatible with the marginal path.

A deterministic map cannot generally push an atomic source to an arbitrary target: a point mass cannot map to a two-point mixture. Thus a map-based inverse-JKO formulation \citep{persiianov2026learning} needs adaptation when transport splits mass. Graph fluxes provide one alternative; \citet{li2018natural} discuss the distinct dynamical graph geometry. This does not exclude every finite-state analogue of a transport map.

\paragraph{Discrete diffusion and score-based models.}
Discrete generative modeling provides a separate use of finite-state stochastic dynamics. Multinomial diffusion and D3PM construct prescribed corruption processes whose reversal is learned from data \citep{hoogeboom2021argmax,austin2021structured}, while continuous-time formulations represent forward and reverse dynamics through continuous-time Markov chains \citep{campbell2022continuous}. Concrete Score Matching \citep{meng2022concrete} and SEDD \citep{lou2024discrete} use neighboring probability ratios as discrete analogues of continuous scores or as sufficient quantities for reverse-time dynamics.

DAM uses the same type of local density-ratio information for a different purpose. The ratios do not parameterize the population velocity or a reverse diffusion process; they determine the marginal-dependent mobility needed to evaluate the discrete Wasserstein kinetic energy. A separate potential then represents the canonical population dynamics. Appendix~\ref{app:discrete_scores} develops this connection in detail.

\paragraph{Discrete flow-based generative models.}
Discrete Flow Matching \citep{gat2024discrete} and related multimodal generative-flow constructions \citep{campbell2024generative} learn dynamics along prescribed or designed probability paths between endpoint distributions. Other approaches construct flows on the probability simplex using Dirichlet \citep{stark2024dirichlet} or Fisher--Rao geometry \citep{davis2024fisher}. Their primary goal is generative transport between endpoint distributions, with the intermediate probability path chosen as part of the modeling procedure. DAM instead addresses population-dynamics inference, where intermediate temporal marginals are observed data and the objective is to recover the canonical graph velocity associated with their evolution.

    \paragraph{Graph transport computation and population geometry.}
    \citet{lavenant2018surfaces} develop dynamical transport on discrete surfaces, and \citet{erbar2020computation} give computational methods for dynamical transport on graphs. \citet{gentile2026barycenters} study static and dynamic graph barycenters. These works optimize paths or distances between supplied distributions. The quadratically regularized graph transport problem of \citet{essid2018quadratic} uses linear ground distance, a different objective from DAM's kinetic action. \citet{peyre2015entropic} uses entropic regularization of transport plans to compute gradient flows; this regularization is distinct from entropy in a learned free energy.

    \citet{diazgarcia2021population} connect Wasserstein gradient flows to population games, while \citet{mertikopoulos2018riemannian} study game dynamics through Riemannian costs of motion. The simplex course of \citet{pistone2019information} supplies information-geometric background; its Fisher geometry differs from the graph-Wasserstein metric used here. \citet{erbar2014gradient} develops a nonlocal jump-kernel construction on Euclidean space, providing broader context rather than the direct finite-state source of each displayed operator.

    \paragraph{Attribution and the scope of the geometry.}
    We use probability coordinates and a prescribed symmetric conductance, as in graph formulations associated with Chow and collaborators; Maas's reversible-chain construction uses stationary-relative density, and Mielke develops finite-state gradient structures. These are connected constructions, not an exhaustive division into two schools. The conversion and its limitations are in \eqref{eq:maas_conductance_probability}. Geodesic work \citep{gangbo2019geodesics,erbar2019geometry} studies global and boundary questions; our action identity only uses the positive interior tangent problem and does not assume one entire admissibility framework contains another. More specifically, \citet[Theorem~3.4]{erbar2019geometry} give a Hamilton--Jacobi duality proof building on the earlier duality result of \citet{gangbo2019geodesics}. The former use an irreducible reversible Markov triple and an admissible mean; the latter formulate symmetric graph weights and a smooth, concave, homogeneous interior-positive mobility with finite $\int_0^1g(r,1-r)^{-1/2}\,dr$, treating geodesics that can meet the boundary. Their global results require more than the interior assumptions used here.

    \paragraph{Path optimization and mechanics.}
    The WLF formulation of \citet{neklyudov2024wasserstein} expresses a primal action over probability paths through a dual problem: the action potential is fitted for a path, and an outer objective optimizes the path. Differentiation through a parameterized path is distinct from backpropagating through a simulated trajectory. We make no historical ``first'' claim for that dual principle. \citet{guan2026call} instead introduce Wasserstein Lagrangian Mechanics to learn second-order population evolution from snapshots; this is a distinct objective from WLF and is not a directly interchangeable finite-state baseline. \citet{kornilov2024optimal} study the optimal-transport connection for Flow Matching (FM), whereas \citet{kornilov2025equivalence} establish the corresponding AM equivalence.

    \paragraph{Hamiltonian and accelerated sampling.}
    \citet{zhou2026accelerated} use mobility-weighted graph kinetic energy in a damped Hamiltonian system for a probability distribution and momentum potential, with a jump-process realization. DAM shares a potential-based continuity equation after matching conductances and signs, but fits a prescribed path rather than imposing their additional momentum equation to approach a target distribution. This relationship does not turn the learned action potential into a target log density for classical Hamiltonian Monte Carlo, nor imply accelerated mixing or greater sampling accuracy. Such guarantees require a separately specified dynamics and analysis.

    \paragraph{Time-discrete and aggregate-observation population inference.}
DICE \citep{blickhan2025dice} learns population dynamics on continuous state spaces using an objective derived from a time-discrete weak continuity equation. Its loss is invariant under spatially constant, time-dependent shifts of the potential. This discretization concerns time, whereas DAM provides a finite-state counterpart of Action Matching under a chosen graph mobility. With finite observations, DAM explicitly specifies the interpolating marginal path used by its learning objective.

\citet{mascherpa2025aggregate} estimate a common Markov transition matrix jointly with transport plans between aggregate observations through a convex entropic optimal-transport formulation. DAM instead learns the canonical, potentially time-dependent velocity of a prescribed marginal path under its chosen mobility. Its local density-ratio representation supplies the marginal information needed to evaluate the action from samples without graph-geodesic preprocessing. The FEGF adaptations used in our experiments, including the time-indexed potential for gastrulation, are specified in Appendix~\ref{app:implementation}.

\paragraph{Potential constructions and graph bridges.}
Potential representations and the associated Riemannian formulation on the probability simplex are developed by \citet{chow2019discrete}. In discrete generative modeling, \citet{shaul2025kinetic} construct kinetic-optimal flows using graph-Laplacian potentials along prescribed or designed probability paths. \citet{campbell2024generative} show how detailed-balance additions to transition rates preserve a prescribed marginal evolution. These works establish closely related potential constructions and the freedom to choose among compatible Markov realizations. DAM combines these established structures with a mobility-dependent action objective and local-ratio estimation for a training path specified from unpaired temporal snapshots.

Generalized Schr\"odinger bridges on graphs \citep{theodoropoulos2026bridge} optimize path laws under endpoint constraints, a relative-entropy penalty against a reference process, and additional running costs. DAM fits a prescribed marginal path and graph mobility.

\subsection{Cortical prediction from unpaired observations}
\label{app:cortical_prediction}
Graph-based studies model tau propagation through brain connectivity and its interaction with amyloid burden \citep{vogel2020spread,lee2022regional}, while X-FlowNet and NeuralOMT learn transport-related dynamics from longitudinal regional or cortical measurements \citep{dan2023rethink,huang2024cortical}. An interesting direction is to combine these models with the unpaired marginal objective of DAM \textbf{(ours)}, potentially exploiting observations without matched temporal samples and reducing dependence on complete longitudinal records. Such training could complement paired supervision and improve data utilization; whether it improves next-visit accuracy remains to be tested. The present formulation does not by itself identify subject-specific transitions from unmatched cohorts. Unpaired state samples are not automatically unpaired subjects: personalized prediction additionally requires a justified conditional population model and registered, common state labels.

The sphere and hand experiments provide an intermediate step from controlled synthetic dynamics to cortical surface modeling. To extend this progression to cortical biomarker evolution with changing total burden, we propose developing unbalanced action matching with source and sink terms, alongside conditioning on baseline information and addressing cohort and disease-stage differences. Normalizing PET maps discards absolute burden, and balanced probability transport alone cannot model its increase; NeuralOMT already permits variable mass \citep{huang2024cortical}. Evaluation should use held-out longitudinal subjects and distinguish predictive accuracy from biological identification of inferred transport. At inference, an unavailable follow-up or future target burden must not enter the closure, initialization or rescaling. Fitting observed times does not establish extrapolation, and neither improved prediction nor clinical benefit follows from the present experiments.

\section{Additional experimental details}
\label{app:experiments}

We describe the synthetic reconstruction and gastrulation interpolation benchmarks, compare mobility and density-ratio choices, and evaluate transport on two surface meshes.

\subsection{Finite-snapshot synthetic protocol}
\label{app:fegf_sample_only_protocol}

\paragraph{Data and graphs.}
We adapt the graph free-energy benchmark of \citet{rancati2026learning} to five sampled marginals at times $0,1/4,1/2,3/4,1$. Paths follow free-energy descent with energy $F(p)=\sum_i p_iV_i+\beta\sum_i p_i\log(p_i/\pi_i)$, where $\pi$ is the stationary distribution and $V_i\sim\operatorname{Unif}[-1,1]$ independently. Initial relative densities $\rho_i=p_i/\pi_i$ are sampled uniformly on $[0,100]$ and normalized to satisfy $\sum_i\pi_i\rho_i(0)=1$.

We use eleven graph families: complete, Erd\H{o}s--R\'enyi, regular, Watts--Strogatz, stochastic block model, Delaunay, Euclidean minimum spanning tree, grid, $k$-partite, torus and Apollonian. Geometric constructions use jitter $0.05$ and an edge-weight length scale equal to $0.25$ times the mean edge length. Row-normalizing symmetric weights gives the transition matrix $K$; normalized weighted degrees give $\pi$. DAM uses edge weights $\omega_{ij}=\pi_iK_{ij}$.

The base setting is $(N,M,\beta)=(60,1000,0.1)$, where $N$ is the number of states and $M$ the samples per marginal in each split. One-factor sweeps vary $\beta\in\{0.01,0.1,0.2\}$, $N\in\{20,40,60,80,100\}$ or $M\in\{500,1000,2000,5000,10000\}$. Their union contains eleven settings, giving $363$ graph/setting/seed combinations across seeds 42--44. Methods receive matched, independently sampled training, validation and test splits. Smaller sample budgets use prefixes of the same draws. Fitting and model selection use sampled observations only, without access to the generating probabilities, potentials or intermediate states.

\paragraph{Numerical generation.}
For $\beta=0.1$, we use adaptive Dormand--Prince integration. For $\beta\in\{0.01,0.2\}$, we use backward Euler in log-density coordinates with steps at most $10^{-4}$, checked against $5\cdot10^{-5}$. Generated paths must have positive finite densities, mass error at most $10^{-8}$, nonincreasing energy and observation-probability TV changes at most $10^{-4}$ under refinement.

\paragraph{Evaluation.}
We report Hellinger distance $H(p,q)=2^{-1/2}\bigl[\sum_i(\sqrt{p_i}-\sqrt{q_i})^2\bigr]^{1/2}$ and total variation $\operatorname{TV}(p,q)=\tfrac12\sum_i|p_i-q_i|$. All methods start from the training empirical marginal at $t=0$, propagate without resets and are scored against independent test histograms at the four subsequent observation times. This measures reconstruction at observed times and includes test-sampling noise. We average errors over times within each run and over graphs/settings within each seed, then report means and sample standard deviations across three seeds.

DAM propagates marginals through frozen-generator matrix exponentials with eight substeps per observation interval for both validation and testing (Appendix~\ref{app:dam_empirical_and_sampling}). Its softplus rate split uses $1/\varepsilon=1$. FEGF propagates its predicted law in stationary-relative coordinates. Both comparisons use deterministic marginal propagation.

\subsection{Training and baseline specification}
\label{app:implementation}

\begin{table}[t]
    \centering\small
    \caption{Synthetic and gastrulation training settings. All models use Adam, float64 arithmetic, weight decay $10^{-6}$, gradient-clipping norm $1$ and $200$ updates per epoch.}
    \label{tab:training_settings}
    \begin{tabularx}{\linewidth}{@{}p{0.20\linewidth}XX@{}}
        \toprule
        Setting & DAM & FEGF \\
        \midrule
        Architecture & 16-dimensional node embedding, scalar time, two width-64 ReLU layers, scalar output & Synthetic: edge network with embedding 16 and width 64; gastrulation: time-indexed scalar table \\
        Learning rate & $5\cdot10^{-4}$ & Synthetic: $5\cdot10^{-4}$; gastrulation: $10^{-2}$ \\
        Batch size & 128 & 64 \\
        Maximum epochs & Ratio: 60; action: 100 & 100 \\
        Selection & Ratio validation loss; action observed-time validation TV & Observed-time validation TV \\
        Early-stopping patience & Ratio: 10 epochs; action: none & 15 epochs \\
        Support treatment & Uniform-reference mixing $\delta=0.01$ & Pseudocount $0.5$ per state; rollout mixture $0.98p+0.02\pi$ \\
        \bottomrule
    \end{tabularx}
\end{table}

Both DAM stages use the same smoothstep interpolant with uniform-reference mixing, including at the endpoints, and sample four neighbors per state with importance weights. Histogram DAM evaluates $U(t,i)=\log\widetilde p_i(t)$; learned-ratio DAM minimizes the local softplus loss. Both variants share action architectures, initial weights, training streams, update budgets and selection rules. Potentials have zero spatial mean. ReLU potentials are continuous and piecewise differentiable, and we use their almost-everywhere time derivatives. Table~\ref{tab:training_settings} gives the fixed settings, reused across gastrulation folds without a new tuning search.

\paragraph{FEGF adaptation.}
We adapt the free-energy gradient-flow method (FEGF) of \citet{rancati2026learning}. Empirical probabilities receive pseudocount $0.5$ and are converted to $\rho=p/\pi$. Adjacent-interval targets use the actual time gap, a weighted-incidence least-squares solve with one pinned vertex, and the negative free-energy-difference convention. Logarithmic means use an input floor of $10^{-6}$, and the continuity residual is checked. The last interval target is reused at the terminal observation.

Synthetic FEGF learns edge differences and a positive entropy coefficient. Gastrulation FEGF instead learns a scalar value per state and observation time, interpolated in time, with no trained entropy coefficient. Both minimize the sample-based velocity-target loss and select checkpoints by observed-time validation TV under propagation without resets. The omitted atlas marginal is used only for scoring.

FEGF uses conductances $\pi_iK_{ij}\theta_{\log}(p_i/\pi_i,p_j/\pi_j)$, whereas logarithmic DAM uses $\pi_iK_{ij}\theta_{\log}(p_i,p_j)$. These geometries differ when $\pi$ is nonuniform. The comparison evaluates marginal prediction under these respective geometries; fixed optimizer budgets do not imply equal prior tuning effort.

\subsection{Synthetic results and ablations}
\label{app:fegf_topology_results}
\label{app:fegf_ablations}

Figure~\ref{fig:rancati_topology} reports the base setting, while Figure~\ref{fig:rancati_ablations} varies one factor at a time. Both DAM variants achieve lower mean Hellinger errors than FEGF across all base-setting graph families and the reported sweep averages. Learned ratios and histograms perform comparably, with no consistent advantage for learned ratios across metrics.

\begin{figure}[tbp]
    \centering
    \includegraphics[width=\linewidth]{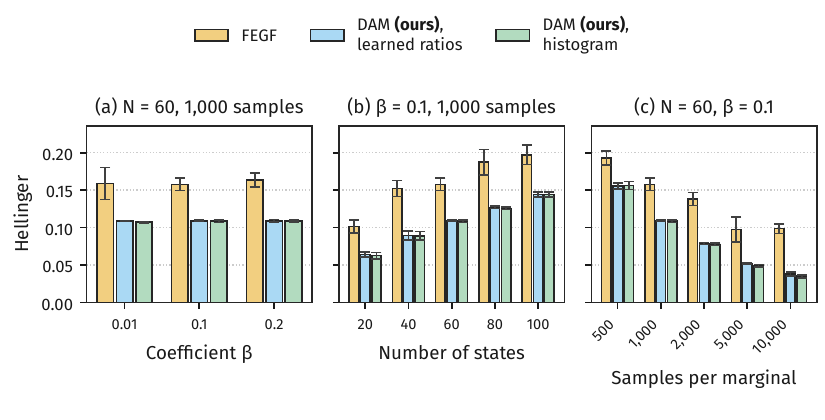}
    \caption{Synthetic sweeps with logarithmic mobility. Bars average Hellinger errors over eleven graph families; whiskers show sample SD across three seeds. Each panel varies one factor from the base setting. Smaller sample budgets share draws.}
    \label{fig:rancati_ablations}
\end{figure}

\paragraph{Mobility and density-ratio comparisons.}
We refit each DAM variant with arithmetic, geometric, harmonic and logarithmic mobility, holding data, smoothing, interpolation, architecture and optimizer budgets fixed. Table~\ref{tab:mobility_closure} reports both metrics, and Figure~\ref{fig:closure_mobility} shows seed variability. Arithmetic action training is algebraically ratio-free, although its Markov sampler still uses density ratios. Harmonic mobility gives the lowest gastrulation means among the tested choices; it is a descriptive comparison, without independent mobility selection.

\begin{table}[t]
    \centering\small
    \caption{Mean errors for separately fitted mobilities. Synthetic results average eleven base-setting graphs and three seeds; gastrulation results average seven omitted stages and three seeds. L: learned ratios; Hst: histograms. Lower is better; values are rounded to three decimals.}
    \label{tab:mobility_closure}
    \begin{tabular}{llrrrr}
        \toprule
        Dataset & Mobility & $H$ (L) & $H$ (Hst) & TV (L) & TV (Hst) \\
        \midrule
        Synthetic & Arithmetic  & 0.110 & 0.110 & 0.106 & 0.107 \\
                  & Geometric   & 0.109 & 0.108 & 0.106 & 0.106 \\
                  & Harmonic    & 0.109 & 0.108 & 0.106 & 0.106 \\
                  & Logarithmic & 0.109 & 0.109 & 0.106 & 0.106 \\
        \midrule
        Gastrulation & Arithmetic  & 0.285 & 0.262 & 0.268 & 0.242 \\
                     & Geometric   & 0.234 & 0.229 & 0.220 & 0.215 \\
                     & Harmonic    & 0.220 & 0.203 & 0.207 & 0.188 \\
                     & Logarithmic & 0.250 & 0.251 & 0.232 & 0.233 \\
        \bottomrule
    \end{tabular}
\end{table}

\begin{figure}[tbp]
    \centering
    \includegraphics[width=\linewidth]{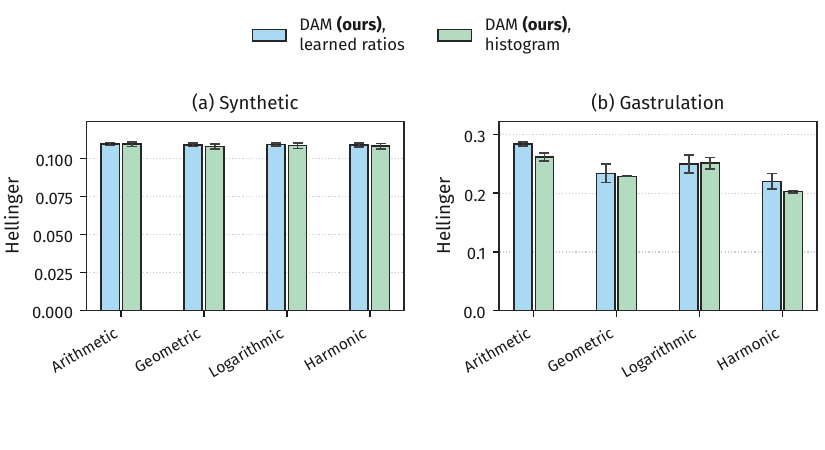}
    \caption{Hellinger errors for learned-ratio and histogram DAM, refitted for each mobility. Bars and whiskers show means and sample SD across three seeds after averaging graphs or omitted stages. Panels use separate vertical scales.}
    \label{fig:closure_mobility}
\end{figure}

\subsection{Mouse gastrulation}
\label{app:gastrulation_experiment}

\paragraph{Data and graph.}
We use processed counts associated with the mouse gastrulation atlas of \citet{pijuan2019single}: $89{,}267$ cells, $34$ cell types and nine stages from E6.5 to E8.5 at quarter-day intervals. These processed counts define the benchmark; the exact source release and filtering steps have not been fully recovered.

Atlas marginals use pseudocount smoothing, $p_{ki}=(n_{ki}+0.5)/(n_k+17)$. DAM additionally mixes each resampled empirical marginal with a uniform reference, since resampling can omit cell types. FEGF applies the separate support treatments in Table~\ref{tab:training_settings}. The graph combines a connected lineage prior with $44$ unordered edges and a background weight of $0.01$ added to every adjacency entry before normalization. All $561$ unordered state pairs consequently support transport, including transitions outside the lineage prior.

\paragraph{Leave-one-stage-out interpolation.}
For each of the seven interior stages, all methods are fitted to the same eight remaining stages, including later observations. Times are normalized by $(t-6.5)/2$. The omitted stage is removed before constructing training inputs and FEGF targets. Each retained stage supplies $10{,}000$ training, $5{,}000$ validation and $5{,}000$ independent test draws; checkpoint selection uses retained-time validation only.

Each prediction starts from the training empirical marginal at the immediately preceding retained stage and is scored against the omitted, pseudocount-smoothed atlas marginal. This evaluates interpolation on the measured atlas. The empirical initial law differs from the smoothed training interpolant, so the exact-initialization condition of the Markov realization need not hold.

\paragraph{Numerics and results.}
All methods propagate deterministic marginals. Without consulting target errors, we compare $128/256$ substeps and double up to $1024/2048$ until the prediction-TV change is at most $0.001$. All $21$ stage/seed cases pass. Scores average the seven stages equally; standard deviations use three stage-averaged seed values and reflect resampling and optimization of the same atlas.

With harmonic mobility, both DAM variants achieve lower mean Hellinger and TV errors than FEGF and persistence, which keeps the preceding empirical marginal unchanged. Figures~\ref{fig:gastrulation_loo} and~\ref{fig:gastrulation_tv} show stagewise Hellinger and TV errors. Differences between learned ratios and histograms depend on the stage. Discrepancy from the training interpolant also reflects initialization and fitting errors, which remain after numerical refinement.

\begin{figure}[tbp]
    \centering
    \includegraphics[width=\linewidth]{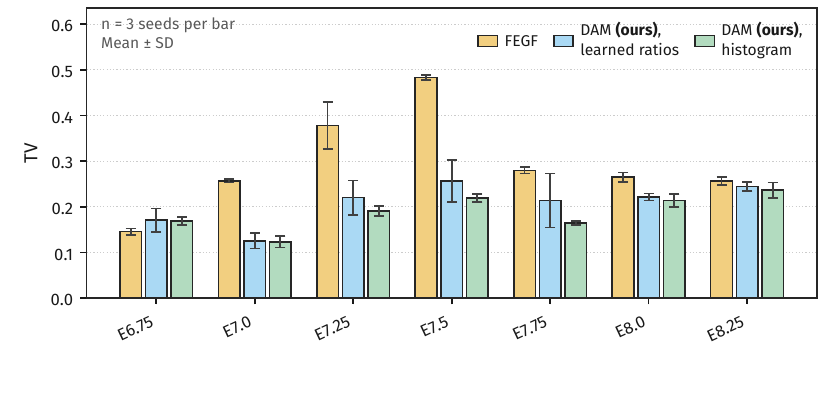}
    \caption{Gastrulation TV errors against omitted, smoothed atlas marginals. Each fold uses eight retained stages, including later observations. DAM uses harmonic mobility; FEGF uses its stationary-relative logarithmic geometry. Whiskers show sample SD across three sampling/optimization seeds, rather than biological replicates.}
    \label{fig:gastrulation_tv}
\end{figure}

\subsection{Sphere and hand surface transport}
\label{sec:mesh_experiment}

\begin{wraptable}{R}{0.50\linewidth}
    \centering\small
    \caption{Mean TV at $16$ omitted times against independent test-particle marginals, seed 42; lower is better.}
    \label{tab:mesh_accuracy}
    \setlength{\tabcolsep}{3pt}
    \begin{tabularx}{\linewidth}{@{}Xrr@{}}
        \toprule
        Method & Sphere & Hand \\
        \midrule
        DAM, histogram & 0.169 & 0.246 \\
        DAM, local + global ratios & 0.192 & 0.250 \\
        Tuned FEGF & 0.564 & 0.843 \\
        \bottomrule
    \end{tabularx}
\end{wraptable}

We evaluate DAM on a sphere with $1020$ vertices and a hand mesh with $1515$ vertices, using marginal paths supplied by the ADMM surface-transport solver of \citet{lavenant2018surfaces}. The learner's graph geometry differs from the teacher's triangle-based geometry.

Each mesh uses $50{,}000$ training, $25{,}000$ validation and $25{,}000$ independent test particles, with seed 42. Here DAM uses paired vertex trajectories derived from the teacher's continuity equation. Both DAM variants obtain lower omitted-time TV than tuned FEGF, with histogram DAM competitive or better (Table~\ref{tab:mesh_accuracy}). These results concern two supplied paths and one seed; transfer to new endpoint pairs or unseen meshes remains untested.

\subsection{Surface teacher, sampling and learning protocol}
\label{app:mesh_protocol}

\paragraph{Teacher and paired sampling.}
The ADMM solver uses $31$ time cells, congestion coefficient $0.02$ and residual tolerance $10^{-4}$. Its cell-centered marginals and endpoints form $33$ saved times. Alternating times, including both endpoints, provide $17$ DAM observations; the remaining $16$ are excluded from DAM fitting and selection.

We convert finite-element momentum to minimum-norm triangle-edge flows preserving its divergence, then correct the finite ADMM continuity residual by a sparse graph projection. Conversion requires residual TV at most $0.005$ and an edge-correction $\ell^1$ norm at most $10^{-4}$ plus $5\%$ of the original flow norm on each interval. An area-reference mixture of $0.001$ ensures positive sampling probabilities, and ODE verification checks accumulated sampling TV against $0.0005$. This produces paired vertex Markov trajectories without identifying continuous surface characteristics. DAM receives no teacher velocities; all rollouts start from the same training empirical marginal and propagate without resets.

\paragraph{DAM training.}
Both variants use logarithmic mobility, smoothstep interpolation and reference mixing $0.01$. Histogram DAM interpolates training histograms before taking logarithms and accesses this stored path at inference. Learned-ratio DAM combines local and uniformly proposed global-pair logistic losses with equal weights. Both train the potential using paired integration-by-parts samples, preserving the population objective.

Networks use $32$-dimensional node embeddings, width-$64$ hidden layers, scalar time input and a zero-mean potential gauge. Ratio and action training use $60$ and $100$ epochs, respectively, with $200$ updates per epoch. In the same order, learning rates are $2.2846\cdot10^{-3}$ and $9.4677\cdot10^{-4}$, weight decays $1.1117\cdot10^{-5}$ and $3.7445\cdot10^{-8}$, batch sizes $512$ and $128$, and clipping norms $0.25935$ and $3.39333$. The variants share potential architectures, initialization and budgets. Hand models reuse sphere settings with fresh weights.

Checkpoints minimize the respective simulation-free validation objectives on retained-time interpolants, using eight Gauss--Legendre nodes per interval and a final order-$16$ check. Validation uses independent samples; action losses under different frozen ratio models are not ranked against one another.

\paragraph{FEGF adaptation and selection.}
Surface FEGF uses a time-indexed scalar potential table, empirical geodesic targets and linear potential interpolation, with no trained entropy coefficient. Separate searches complete $93$ sphere trials and $102$ hand trials, with at most $200$ epochs each. Every five epochs, selection evaluates TV at the $16$ omitted validation times using $16$ rollout substeps and $30$-epoch patience. Thus FEGF and DAM use different validation information, geometries and tuning budgets; test data are excluded from selection.

Selected sphere and hand learning rates are $0.0587579$ and $0.0582306$, weight decays $2.87534\cdot10^{-8}$ and $0.000138841$, batch sizes $256$ and $128$, clipping norms $1.75120$ and $0.138872$, updates per epoch $400$ and $200$, and mass floors $0.000144722$ and $0.167839$, respectively.

\paragraph{Rollouts and visualization.}
Step doubling must change predictions by at most $0.001$ in TV, checking numerical consistency rather than absolute integration error. DAM uses $32$ midpoint-frozen substeps per interval; tuned FEGF uses $1024$ on sphere and $32$ on hand, while fixed FEGF uses $256$ on both. The accuracy table averages the $16$ omitted times; diagnostics and costs use all $32$ noninitial times.

Figures~\ref{fig:sphere_transport} and~\ref{fig:hand_transport} show probability divided by normalized vertex area, with face colors averaging vertex densities. Each mesh uses a fixed camera and logarithmic color scale shared across methods and times. Quarter-time marginals are interpolated for display only; metrics use the original times.

\begin{figure}[tbp]
    \centering
    \includegraphics[width=\linewidth]{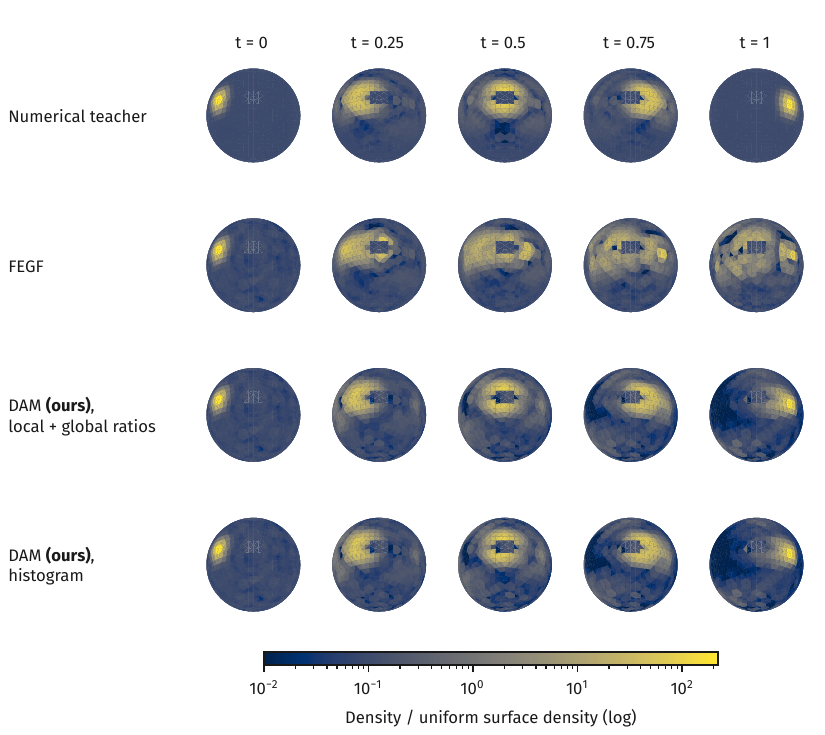}
    \caption{Sphere transport, seed 42. Rows: numerical teacher, tuned FEGF, DAM with local + global ratios, and histogram DAM. Columns: $t=0,0.25,0.5,0.75,1$. Colors show area-normalized probability density.}
    \label{fig:sphere_transport}
\end{figure}

\begin{figure}[tbp]
    \centering
    \includegraphics[width=\linewidth]{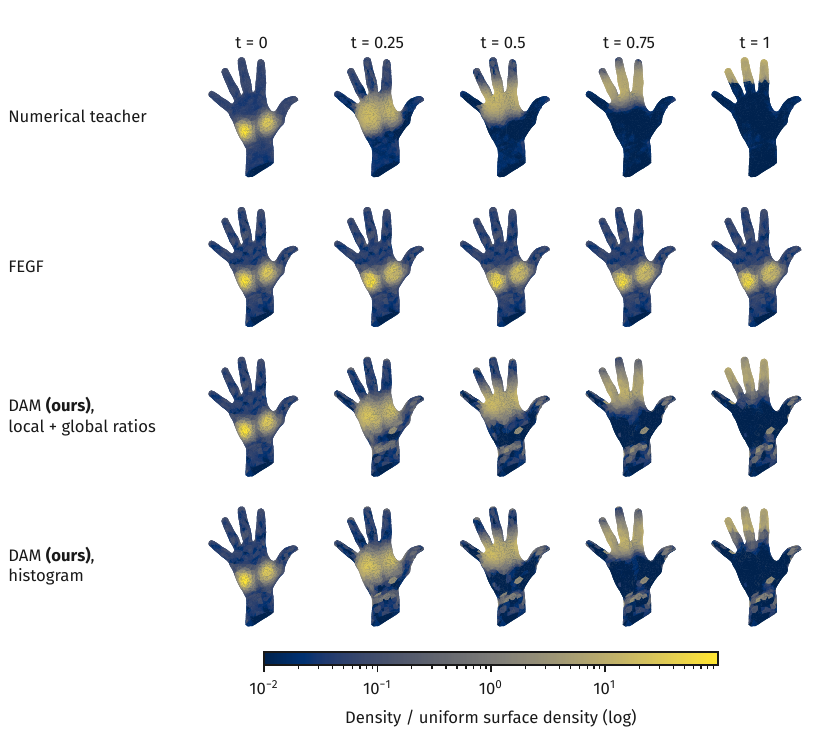}
    \caption{Hand transport, seed 42, using the row order, display times and density-rendering convention of Figure~\ref{fig:sphere_transport}.}
    \label{fig:hand_transport}
\end{figure}

\subsection{Mesh estimator and integration diagnostics}
\label{app:mesh_diagnostics}

In a cumulative sphere study, paired integration-by-parts training improves rollout accuracy. Adding global-pair ratio estimation improves density and endpoint errors but worsens mean trajectory TV. Better density estimates therefore do not uniformly improve rollouts. For the combined model, midpoint freezing reduces the substeps needed to meet the $0.001$ refinement tolerance from $128$ to $32$, with little change in accuracy. These single-seed comparisons do not isolate independent causal effects, and numerical refinement does not control sampling or fitting error.

\subsection{Accuracy versus computational cost}
\label{app:mesh_cost}

Figure~\ref{fig:mesh_cost} reports CPU times on an Intel Xeon Gold 6338 using float64 arithmetic and one thread. Medians and ranges cover three repetitions of configurations using seed 42, so whiskers measure runtime variation.

DAM training includes setup and potential fitting, plus ratio fitting for the learned variant. Fixed FEGF includes target preparation, optimization and validation but excludes model setup. Both exclude teacher generation, particle-data creation and hyperparameter search. Tuned FEGF appears only for inference because its training time was not measured. Inference covers $33$ output times, excluding model loading and output serialization.

Training exceeds a single teacher solve in these runs. Since the methods have unequal errors, histogram DAM accesses stored intermediate distributions, and endpoint-constrained transport differs from evaluating a learned path, these timings do not establish an equal-accuracy or end-to-end speedup on new transport problems.

\begin{figure}[tbp]
    \centering
    \includegraphics[width=\linewidth]{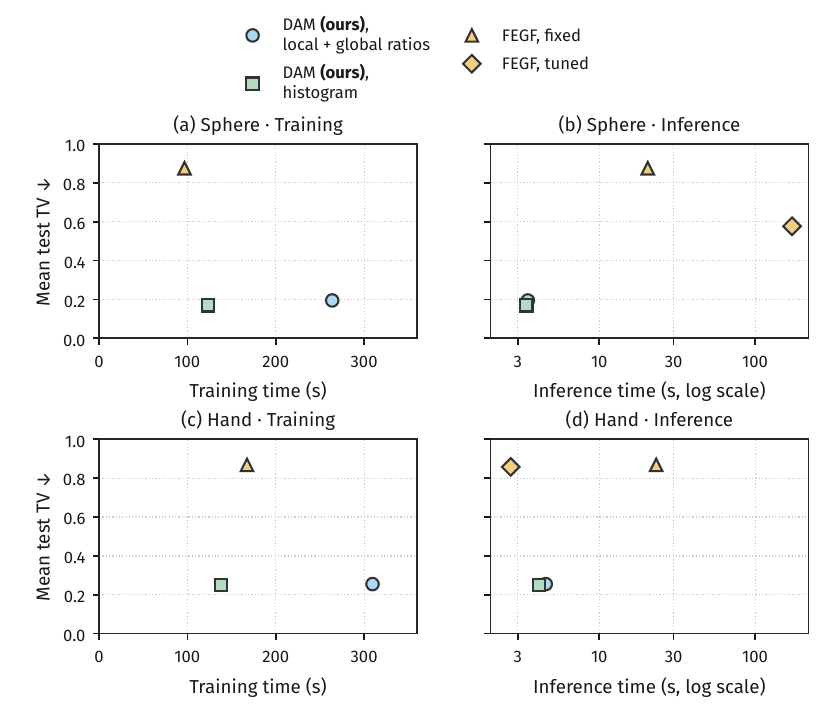}
    \caption{Accuracy versus CPU cost on sphere (top) and hand (bottom): training (left) and marginal inference (right, logarithmic time axis). TV averages $32$ noninitial times. Whiskers span three runtime repetitions around median times. Tuned FEGF is inference-only.}
    \label{fig:mesh_cost}
\end{figure}

\end{document}